%% file: main.tex
\documentclass[a4paper,fleqn]{cas-dc}

\usepackage[numbers]{natbib}
\usepackage{amsmath,amsthm,amssymb,amsfonts}
\usepackage{cases}
\usepackage{algorithm}
\usepackage{algorithmicx}
\usepackage{algpseudocode}
\usepackage{array}
\usepackage{booktabs}
\usepackage{tabularray}
\usepackage{multirow}
\usepackage{siunitx}
\usepackage{subfig}
\usepackage{enumerate}
\usepackage{textcomp}
\usepackage{url}
\usepackage{pbox}
\usepackage{bm}
\usepackage{xcolor}
\usepackage{colortbl}
\usepackage{soul}
\usepackage{pifont}
\usepackage{alltt}
\usepackage{float}
\usepackage[section]{placeins}

\newtheorem{assumption}{Assumption}
\newtheorem{theorem}{Theorem}
\newtheorem{problem}{Problem}
\newtheorem{lemma}{Lemma}

\newtheorem{corollary}{Corollary}
\newtheorem{remark}{Remark}

\definecolor{limegreen}{rgb}{0.2, 0.8, 0.2}
\definecolor{forestgreen}{rgb}{0.13, 0.55, 0.13}
\definecolor{greenhtml}{rgb}{0.0, 0.5, 0.0}

\begin{document}
\makeatletter
\def\ps@first{\let\@oddhead\@empty\let\@evenhead\@empty\let\@oddfoot\@empty\let\@evenfoot\@empty}
\def\ps@cas{\let\@oddhead\@empty\let\@evenhead\@empty\let\@oddfoot\@empty\let\@evenfoot\@empty}
\pagestyle{empty}
\makeatother
\ExplSyntaxOn
\bool_gset_true:N \g_stm_nologo_bool
\ExplSyntaxOff
\let\WriteBookmarks\relax
\def\floatpagepagefraction{1}
\def\textpagefraction{.001}

\shorttitle{Decentralized Pose Graph Riemannian Optimization}
\shortauthors{Y. Zhao et~al.}

\title[mode=title]{Decentralized Pose Graph Riemannian Optimization for Object-based Multi-Robot SLAM}

\author[1]{Yixian Zhao}[orcid=0009-0001-1652-7574]
\ead{12132038@zju.edu.cn}

\author[3]{Yan Huang}
\ead{yahuang@kth.se}

\author[2]{Yang Xu}
\ead{xuyang_robot@tongji.edu.cn}

\author[1]{Liang Li}
\ead{liang.li@zju.edu.cn}

\author[1]{Jinming Xu}
\cormark[1]
\ead{jimmyxu@zju.edu.cn}

\affiliation[1]{
  organization={College of Control Science and Engineering, Zhejiang University},
  city={Hangzhou},
  postcode={310027},
  country={China}
}

\affiliation[2]{
  organization={Shanghai Research Institute for Intelligent Autonomous Systems, Tongji University},
  city={Shanghai},
  postcode={201210},
  country={China}
}

\affiliation[3]{
  organization={Division of Decision and Control Systems, School of EECS, KTH Royal Institute of Technology},
  city={Stockholm},
  postcode={SE-100 44},
  country={Sweden}
}

\cortext[cor1]{Corresponding author.}

\begin{abstract}
Pose graph optimization (PGO) is a key back-end component for state estimation in networked multi-robot simultaneous localization and mapping (SLAM). In object-based multi-robot SLAM, the problem becomes more tightly coupled because robots must jointly estimate both their trajectories and the poses of persistent objects observed by multiple agents. Existing decentralized solutions often assume that the communication graph closely matches the physical interaction topology, which is restrictive in realistic deployments where communication is sparse, intermittent, or time-varying. This paper presents a fully decentralized Riemannian optimization framework for object-based multi-robot PGO that decouples the coupled estimation problem via a consensus mechanism, enabling flexible communication topologies. To improve convergence under limited communication budgets, we further develop a distributed approximate-Newton scheme that exploits local second-order information while operating directly on the SE(d) manifold to preserve geometric consistency, and we establish the convergence to Riemannian first-order stationary points and provide a local condition-number analysis explaining the benefit of approximate second-order information over first-order Riemannian descent. The resulting method reduces iteration count and communication overhead without sacrificing estimation accuracy. Extensive evaluations on public benchmarks, large-scale simulations, and real-world multi-robot experiments demonstrate improved accuracy, runtime efficiency, scalability across network topologies, and robustness to communication failures.
\end{abstract}

\begin{keywords}
Simultaneous localization and mapping \sep Distributed optimization \sep Multi-robot systems \sep Pose graph optimization \sep Riemannian manifold
\end{keywords}

\maketitle

\input{article_body}

\section*{Declaration of Competing Interest}
The authors declare that they have no known competing financial interests or personal relationships that could have appeared to influence the work reported in this paper.

\section*{Data Availability}
Data and code will be made available on reasonable request.

\printcredits

\bibliographystyle{cas-model2-names}
\bibliography{ref}

\input{appendix}

\end{document}

%% file: article_body.tex
\section{Introduction}

Multi-robot systems are increasingly deployed in safety-critical and resource-constrained missions such as search-and-rescue, subterranean exploration, and large-scale inspection. In these applications, the closed-loop performance of autonomy pipelines (planning, control, and coordination) critically depends on reliable state estimation\cite{kimera_multi,schmuck2021covins,xuyang2025,slam_tase_2025}. Multi-robot simultaneous localization and mapping (SLAM) addresses this need by enabling robots to build a consistent map while estimating their trajectories, and pose graph optimization (PGO) serves as a core back-end module that improves global consistency by fusing relative measurements across time and agents \cite{halsted2021survey,ott2023fusing}.

\begin{figure}
    \centering
    \label{fig:dingo}
    \subfloat[A multiple Dingo robot system]{
        \label{fig:dingo_system}
        \includegraphics[width=8cm]{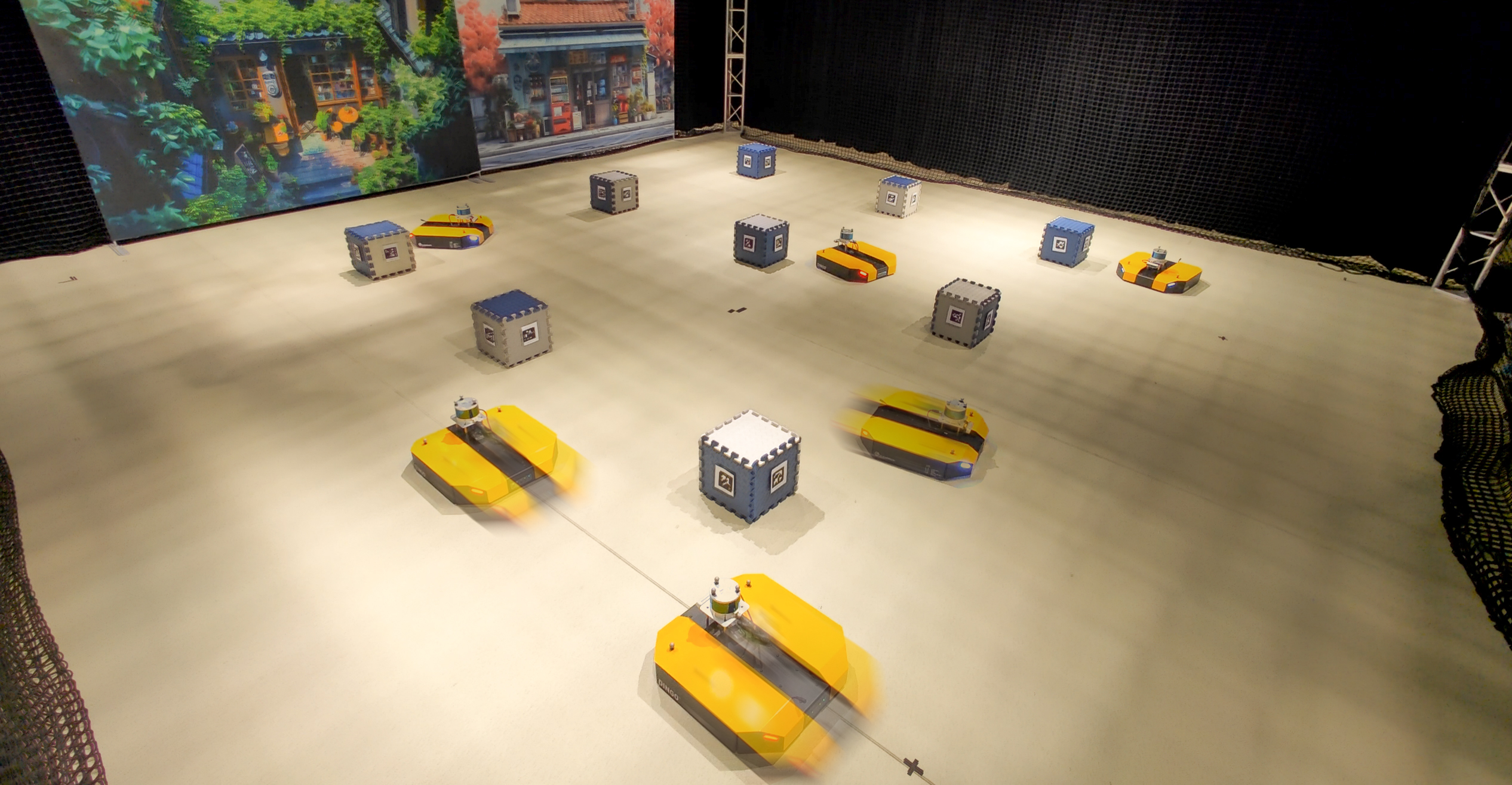}}
    \newline
    \subfloat[Initial poses]{
        \label{fig:dingo_initial}
        \includegraphics[width=4cm]{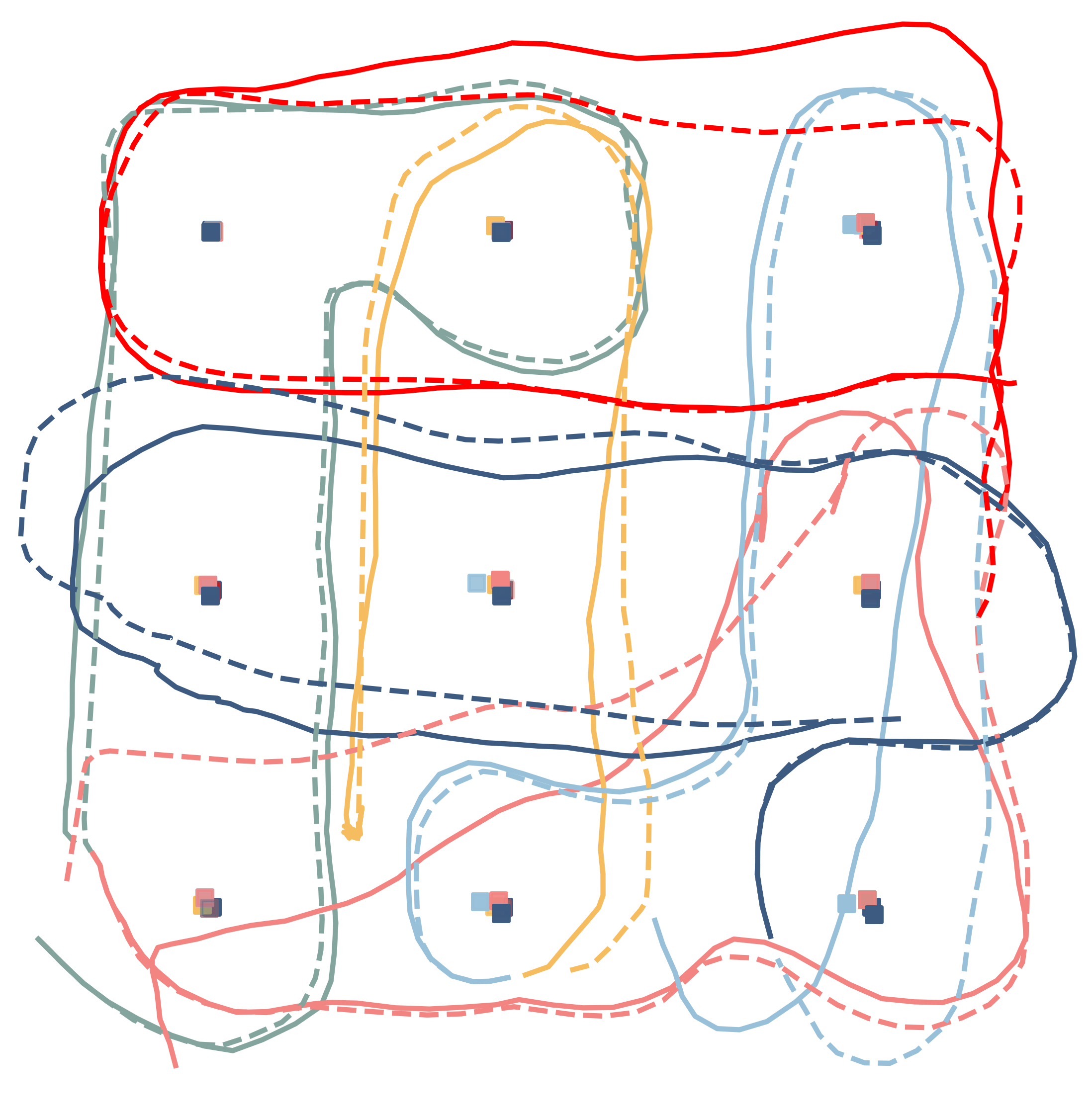}}
    \subfloat[Optimized poses]{
        \label{fig:dingo_optimized}
        \includegraphics[width=4cm]{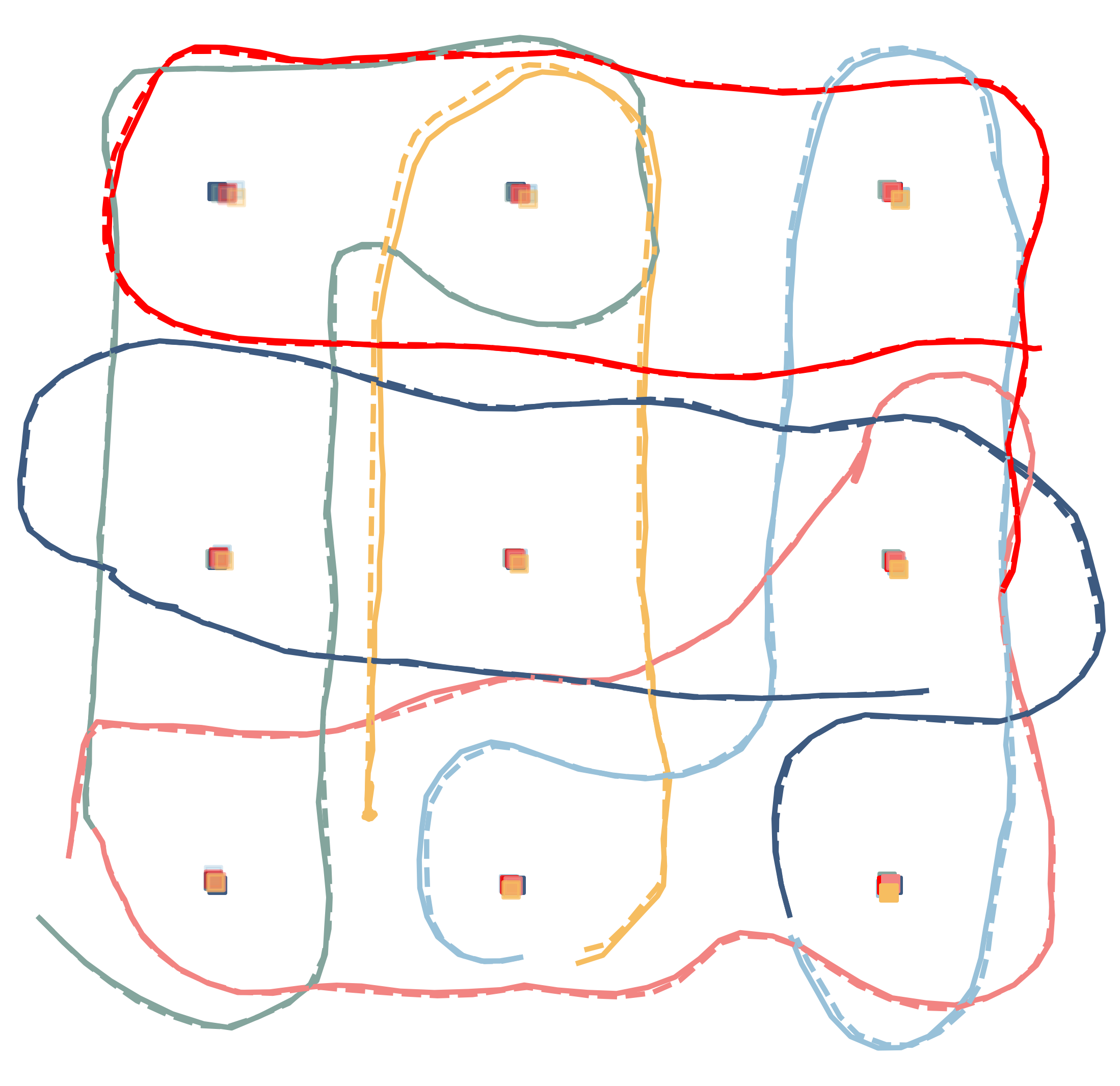}}
    \caption{The real-life experiment system comprises six Dingo robots and nine objects. The colored solid lines and square dots in (b) and (c) refer to the estimated trajectories and object positions of robots, respectively. The colored dashed lines and dark blue square dots refer to the true values of the robot trajectories and the object positions, respectively.}
\end{figure}

Recently, object-based SLAM has attracted increasing attention because persistent objects provide compact and semantically meaningful landmarks for long-term localization and mapping. 
In object-based multi-robot SLAM, the back-end must jointly estimate robot trajectories and object poses observed by multiple robots, leading to a tightly coupled pose-graph optimization (PGO) problem \cite{choudhary2017distributed, Tian2022DistributedRO}. 
A key feature of this problem is that its variables exhibit two different coupling structures: object poses are long-lived public variables shared across robots, whereas robot trajectories are high-dimensional private variables coupled only through sparse inter-robot loop closures. 
Existing decentralized PGO methods usually adopt either vertex-based splitting or edge-based splitting for the entire problem \cite{ziegler2021distributed,nguyen2022relative,mcgann_mesa_2024,choudhary2017distributed,tian2021distributed,Tian2022DistributedRO}. 
Such a uniform treatment does not fully exploit the variable structure of object-based PGO and may lead to unnecessary communication, especially when the communication graph is sparse, bandwidth-limited, or mismatched with the physical measurement graph.

Another challenge lies in the efficiency of decentralized optimization. 
First-order Riemannian methods preserve the geometry of $SE(3)$ but may require many communication rounds in ill-conditioned PGO problems \cite{tian2021distributed,9143442}. 
Approximate Newton methods can exploit local curvature information, but directly exchanging Hessians, gradients, or full pose graphs is impractical for decentralized multi-robot systems. 
Therefore, an effective object-based multi-robot PGO solver should jointly address three requirements: structure-aware communication reduction, geometry-preserving second-order optimization, and robustness under realistic communication constraints.

Motivated by these observations, this paper proposes DRAN, a decentralized Riemannian approximate Newton framework for object-based multi-robot PGO. 
The central insight is that object-based PGO should not be decoupled by a uniform splitting rule: shared objects and robot trajectories induce fundamentally different communication requirements. 
Objects are persistent public variables that require cross-robot consistency, whereas robot trajectories are high-dimensional private variables that only need to expose sparse separators at inter-robot loop closures. 
DRAN exploits this structure to build a communication-lean decentralized formulation and solves it using local Riemannian approximate Newton updates, where second-order information is used as a curvature preconditioner rather than being exchanged across the network. 
The main contributions are summarized as follows.

\begin{enumerate}[1)]
    \item \textbf{Object-trajectory-aware decentralized formulation:} 
    We characterize the different coupling structures of shared objects and private robot trajectories in object-based PGO, and derive a communication-efficient decoupling scheme that applies object-level consensus only to public variables and separator exchange only to trajectory boundary variables. 
    This formulation avoids unnecessary map or trajectory transmission under mismatched physical and communication topologies.

    \item \textbf{Decentralized Riemannian approximate Newton solver:} 
    We develop a fully decentralized solver on $SE(3)$ that constructs local LM/Gauss--Newton curvature models and computes curvature-preconditioned manifold updates without exchanging full Hessians, gradients, or pose graphs. 
    We prove the well-posedness of the reduced second-order update, establish convergence to Riemannian first-order stationary points, and provide a local condition-number analysis explaining the benefit over first-order Riemannian descent.

    \item \textbf{Communication-oriented evaluation under unreliable networks:} 
    We validate the method on public PGO benchmarks, large-scale object-based simulations, and a real multi-robot system. 
    Beyond accuracy and runtime, we report total transmitted data and evaluate performance under varying communication densities and probabilistic network interruptions, showing near-centralized accuracy with reduced communication load.
\end{enumerate}

\section{Related Works}\label{sec:Related Works}

\subsection{Decentralized PGO in Multi-Robot Systems}

Centralized PGO has matured with highly efficient nonlinear least-squares solvers \cite{g2o,gtsam} and certifiably correct relaxation-based methods (e.g., SE-Sync and Cartan-Sync), offering strong accuracy and scalability when global data aggregation is feasible \cite{Rosen19IJRR,cartan-sync}. 
However, in large-scale multi-robot applications, the central processor may become the bottleneck due to limited communication resources. To this end, Choudhary \emph{et al.} \cite{choudhary2017distributed} propose a two-stage distributed algorithm based on Jacobi over-relaxation (JOR) and successive over-relaxation (SOR) to seek approximate solutions for PGO. However, one step of the Gauss-Newton method in most cases cannot lead to sufficient convergence for distributed PGO. In addition, no line search is performed in \cite{choudhary2017distributed} due to the communication limitation, and thus, the behavior of the single Gauss-Newton step is totally unpredictable and might result in bad solutions. Tian~\emph{et~al.} \cite{tian2021distributed} develop a distributed certifiably correct PGO method that is guaranteed to converge to a globally optimal solution under moderate measurement noises. However, the communication networks of such localized distributed methods \cite{tian2021distributed,choudhary2017distributed, MM-PGO} need to align with the physical topology associated with the distributed PGO problem and might need to keep part of the robots in an idle state for estimate updates. In contrast, ADMM-based methods have been used not only for distributed computing\cite{Choudhary2015} but also to relieve the dependence on the communication topology\cite{ziegler2021distributed, nguyen2022relative} while optimizing in Euclidean space to approximate the solution of the PGO problem.

\vspace{-0.3cm}
\subsection{Decentralized (Object-based) PGO on Manifolds}
To account for constraints imposed on the rotation matrix in pose estimation, Riemannian optimization methods offer greater efficiency and precision due to their ability to leverage nonlinear properties and complex metric structures \cite{knuth2013collaborative, tron2012riemannian, tron2014distributed, 2020arXiv201000156C, mcgann_mesa_2024, Tian2022DistributedRO}. For instance, Knuth~\emph{et~al.}\cite{knuth2013collaborative} optimized local Lie group variables for each agent using a gradient descent method for PGO. To ensure consistency in public variables across multiple agents, consensus-based Riemannian gradient descent methods were introduced to estimate node poses in a camera network \cite{tron2012riemannian, tron2014distributed}, though these methods were found to be sensitive to noises. Likewise, Cristofalo~\emph{et~al.} \cite{2020arXiv201000156C} employed gradient descent on manifold spaces to maintain consistency, but this approach requires a high number of iterations due to the inherent limitations of gradient descent. McGann~\emph{et~al.} \cite{mcgann_mesa_2024} integrated Riemannian optimization into an ADMM-based framework, yet the performance was limited by the chosen optimizer. 
However, in most cases, these methods only estimate robots' poses and not the global map. 
Joint pose-and-map estimation is closer to distributed Structure-from-Motion (SfM) and Bundle Ajustment(BA), where distributed architectures (e.g., splitting/ADMM-style) have been studied for scalability \cite{wu2011multicore,eriksson2016consensus,zhang2017distributed}. 
Most relevant to our setting, LARPG leverages second-order information and separates public/private variables to reduce communication \cite{Tian2022DistributedRO}, but it still depends on a parameter-server architecture, motivating fully decentralized, communication-flexible, and second-order manifold optimization for collaborative object-based PGO.



\textbf{Notations:} We let $(\mathcal{M}, \langle \cdot , \cdot \rangle)$ denote a connected Riemannian manifold with Riemannian metric $\langle \cdot , \cdot \rangle$ and induced norm $\lVert \cdot \rVert$.
We use $\mathbb{R}^n$ to denote the $n$-dimensional Euclidean space. 
The Special Orthogonal group is defined as $SO(d)=\{R\in \mathbb{R}^{d\times d}:R^\top R=I,\det(R)=1\}$, and the Special Euclidean group is denoted as $SE(d)=\{(R,t):R\in SO(d), t\in \mathbb{R}^d\}$.
The tangent space at a point $x \in \mathcal{M}$ is denoted by $T_x\mathcal{M}$.
For a smooth scalar function $f : \mathcal{M} \rightarrow \mathbb{R}$, the Riemannian gradient $\text{grad} f(x) \in T_x\mathcal{M}$ represents the direction of steepest ascent.
We define the retraction $\text{Retr}_x: T_x\mathcal{M} \rightarrow \mathcal{M}$ as a smooth mapping that generalizes the exponential map, preserving first-order geometry.
Conversely, the inverse retraction (or logarithmic map) is denoted by $\text{Log}_x(\cdot) : \mathcal{M} \rightarrow T_x\mathcal{M}$, which maps a point on the manifold back to the tangent space.
Finally, for a positive definite information matrix $\Omega$, the weighted norm is defined as $\lVert v \rVert_{\Omega}^2 = \langle v, \Omega v \rangle$.
More details on Riemannian optimization can be found in \cite{absil2009optimization}.

\input{problem_formulation_v4}

\input{method_tie}

\input{convergence_analysis_3}

\input{experiments_tie}
\section{Conclusion}\label{sec:Conclusion}

This paper presented DRAN, a fully decentralized Riemannian approximate Newton framework for object-based multi-robot PGO. 
The key idea is to exploit the intrinsic variable structure of object-based PGO: shared objects are treated as public variables requiring Riemannian consensus, while robot trajectories remain private and interact only through sparse separator exchange. 
This object-trajectory-aware formulation reduces unnecessary communication under mismatched physical and communication topologies.
To solve the resulting decentralized problem, DRAN constructs local Newton approximation curvature models on the $SE(d)$ manifold and computes curvature-preconditioned updates without exchanging full Hessians, gradients, or pose graphs. 
We established the positive definiteness of the reduced second-order update, proved convergence to Riemannian first-order stationary points, and provided a local condition-number analysis showing how approximate second-order information improves the effective conditioning compared with first-order Riemannian descent.
Extensive experiments on public PGO benchmarks, large-scale object-based simulations, and a real multi-robot platform verified the accuracy, efficiency, and deployability of the proposed method. 
In particular, the communication-centered evaluations demonstrated that DRAN achieves near-centralized estimation performance with lower transmitted data and maintains robustness under sparse communication graphs and probabilistic network interruptions.

%% file: problem_formulation_v4.tex
\section{Problem Formulation}\label{sec:problem formulation}

We consider a collaborative object-based SLAM problem involving $N$ robots communicating over a connected graph $\mathcal{G}^c=(\mathcal{V}^c,\mathcal{E}^c)$. 
The system state consists of private robot trajectories $\bm{x} \triangleq \{x_i\}_{i=1}^N$ and shared environmental object poses $\bm{y} \triangleq \{y_l\}_{l=1}^M$, where $x_i \in SE(d)^{n_i}$ and $y_l \in SE(d)$. 
The global objective is to estimate these states by minimizing the Riemannian nonlinear least squares cost:
\begin{equation} \label{eq:global_cost}
    \min_{\bm{x}, \bm{y}} \sum_{k=1}^{K} \frac{1}{2} \lVert \varphi(T_{p_k}, T_{q_k}, \tilde{T}_{k}) \rVert_{\Omega_{k}}^2,  \ s.t.\ x \in \mathcal{X},\ y\in \mathcal{Y},
\end{equation}
where $T_{p_k}, T_{q_k} \in \{ \bm{x}, \bm{y} \}$ denote the estimated poses connected by the $k$-th measurement $\tilde{T}_k$ (e.g., odometry, object observation, or inter-robot loop) with precision matrix $\Omega_k$. 
The residual function $\varphi(\cdot)$ quantifies the discrepancy between the estimated and measured relative poses. 
Depending on the requirement for accuracy or computational efficiency, we define $\varphi(\cdot)$ using either the Riemannian geodesic distance or the chordal distance:
\begin{equation} \label{eq:residual_fun}
    \varphi(T_{p_k}, T_{q_k}, \tilde{T}_{k}) \triangleq 
    \begin{cases}
    \text{Log}\left( \tilde{T}_{k}^{-1} T_{p_k}^{-1} T_{q_k} \right), & \text{(Geodesic)} \\
    T_{p_k} \tilde{T}_{k} - T_{q_k}, & \text{(Chordal)}
    \end{cases}
\end{equation}
where $\text{Log}(\cdot)$ maps the error from the manifold $SE(d)$ to the tangent space $\mathfrak{se}(d)$ for standard Riemannian optimization, while the chordal formulation operates in the embedding space for robust initialization.
\subsection{Fine-Grained Distributed Decoupling}

\begin{figure}
    \centering
    \includegraphics[width=\columnwidth]{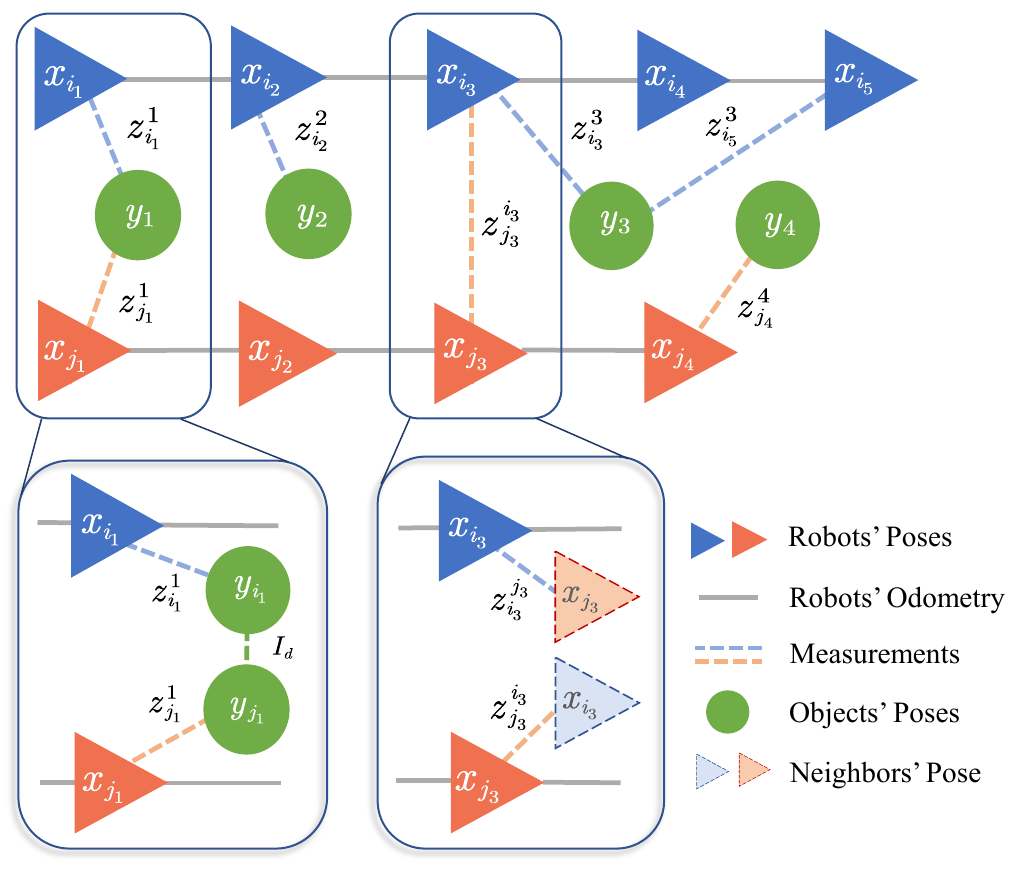} 
    \caption{A pose graph illustration of the multi-robot SLAM problem. Robot $i$ (blue triangle) and robot $j$ (red triangle) collaboratively estimate the poses of shared objects $\{y_1,\dots,y_4\}$ (green circles) and their respective trajectories $x_{i}$ and $x_{j}$. The graph includes intra-robot odometry (solid gray lines) and relative inter-robot/object measurements (colored dashed lines).}
    \label{fig:pgo_schematic}
\end{figure}

To enable fully decentralized optimization under bandwidth constraints, we propose a \textit{hybrid decoupling strategy} that treats map (objects) and trajectory (inter-robot) constraints differently (see Fig.~\ref{fig:pgo_schematic}). This formulation decomposes the global cost \eqref{eq:global_cost} into local sub-problems without sacrificing geometric consistency.

\subsubsection{Object Consistency (Vertex-Based Decoupling)}
Since objects $\bm{y}$ are static landmarks observed by multiple agents (e.g., green circles in Fig.~\ref{fig:pgo_schematic}), we treat them as public variables requiring global consensus. 
We employ a \textit{vertex-based} approach where each robot $i$ maintains a local copy $y_i \subseteq \bm{y}$ of the objects it observes. 
The consistency of the global map is enforced by constraining local copies of connected robots to be identical.
The local object-based cost $f^{\text{obj}}_i(x_i, y_i)$ aggregates measurements between the robot's trajectory $x_i$ and its local object copies $y_i$.

\subsubsection{Trajectory Connectivity (Edge-Based Decoupling)}
Inter-robot loop closures create direct coupling between the trajectories of different robots. Sharing full trajectories is communication-prohibitive. 
Instead, we adopt an \textit{edge-based} strategy using the concept of \textit{separators}.
Let $\mathcal{S}_i \subset x_i$ denote the subset of poses involved in inter-robot measurements with neighbors $j \in \mathcal{N}_i$. 
Robot $i$ only exchanges these separator estimates. During each local optimization iteration, the neighbors' separators $x_{\mathcal{S}_j}$ are treated as fixed priors (anchors). This yields the inter-robot cost term $f^{\text{inter}}_i(x_i | x_{\mathcal{N}_i})$, which optimizes local consistency relative to neighbors without requiring a full variable copy.

\subsection{Distributed Optimization Objective}
By combining the intra-robot odometry cost $f^{\text{intra}}_i(x_i)$ with the decoupled terms derived above, we formulate the distributed PGO as a constrained optimization problem on the manifold product space:

\begin{problem}[Distributed Pose Graph optimization]
\begin{equation}
\begin{aligned}
    &\min_{\bm{x},\bm{y}} f(\bm{x},\bm{y})  \triangleq \sum_{i=1}^N \underbrace{\left(f^{\text{intra}}_i(x_i)  +f^{\text{inter}}_i(x_i)  +f^{\text{obj}}_i(x_i,y_i)\right)}_{\triangleq f_i(x_i,y_i)}\\ 
    &s.t.\ x_i \in \mathcal{X}_i,\ y_i\in \mathcal{Y}_i,\  \varphi(y_i,y_j)=0, \forall i,j \in [N].
    \label{eq:Collaborative Geometric Estimation}
\end{aligned}
\end{equation}
where $f^{\text{inter}}_i(x_i)\triangleq f(x_i|x_{\mathcal{N}_i})$, $\boldsymbol{x}\in \mathcal{X}$ and $\boldsymbol{y}\in\mathcal{Y} $ represent the collective vector of $x_i,\ y_i$ for $i \in [N]$, respectively. 
\label{problem_formulation2}
\end{problem} 

Here, $f_i(x_i, y_i)$ represents the total local cost for robot $i$. The constraint $\varphi(y_i, y_j) = 0$ ensures that the distributed object maps converge to a consistent global reference frame. 
To solve this consensus problem over time-varying networks, we utilize a Metropolis-Hastings weight matrix $W$, which robustly balances information fusion based on node connectivity \cite{weight_matrix}. 
This formulation explicitly decouples the high-dimensional private trajectories from the shared public map, minimizing communication overhead while maintaining the mathematical structure required for precise Riemannian optimization.

%% file: method_tie.tex
\section{Methodology}\label{sec:Method}

In this section, we introduce our decentralized PGO method, which leverages consensus theory and approximate Riemannian Hessian information to solve Problem \ref{problem_formulation2}.


\vspace{-
0.3cm}
\subsection{Algorithm Design}

Similar with the edge-based ADMM framework~\cite{EADMM}, we define an augmented Lagrangian for the global cost function $f(\bm{x}, \bm{y})$ as follows:
\begin{equation}\label{eq:lagrangian_global}
\begin{aligned}
    L\left(\bm{x}, \bm{y},\bm{\lambda}\right) &=f(\bm{x},\bm{y}) +\sum_{i=1}^{n} \sum_{j\in \mathcal{N}_i} \left< \lambda _{ij},w_{ij}\varphi \left( y_i,y_j \right) \right>\\
    &\quad+\sum_{i=1}^{n} \sum_{j\in \mathcal{N}_i}\frac{\beta}{2}\lVert w_{ij}\varphi\left( y_i,y_j \right) \rVert ^2,
\end{aligned}
\end{equation}
where $\lambda _{i}=[\lambda_{ij}]_{j\in\mathcal{N}_i} $ denotes the dual variable associated with robot $i$, $\bm{\lambda}:=[\lambda _{i}]_{i\in [N]}$ is the stack vector of all dual variables and $\beta$ is the penalty coefficient. The weights $w_{ij}$ are introduced in the augmented Lagrangian function to modulate the influence of the consistency constraint between neighboring robots $i$ and $j$. The process of solving the augmented Lagrangian in Eq.~\eqref{eq:lagrangian_global} can be expressed as:
\begin{subequations}
\begin{align}
    y^{k+1}_i,x^{k+1}_i&= \mathop{\arg\min}_{y_i ,x_i} L\left(\bm{x}^k, \bm{y}^k,\bm{\lambda}^k \right) \label{eq:primal_update},\\
    \lambda _{ij}^{k+1}&=\lambda _{ij}^{k}+\eta\beta w_{ij}\varphi\left( y_i^{k+1},y_j^{k+1} \right), \label{eq:lambda_update}
\end{align}
\end{subequations}
where $k$ is the iteration index in the optimization process, and $\rho>0$ denotes the stepsize of updating dual variables.

By rewriting the dual and penalty terms as a `Biased Prior', we can represent the augmented Lagrangian in Eq.~\eqref{eq:lagrangian_global} using a factor-graph~\cite{Choudhary2015}. Specifically, knowing that, when $b$ is constant, $\arg\min_a \left<b,a\right>+(\beta/2)\lVert a\rVert ^2=\arg\min_a (\beta/2)\lVert a+b/\beta\rVert ^2 $, we define the local Lagrangian function by
\begin{equation}\label{eq:Biased Prior}
\begin{aligned}
    \hat{L}_i\left( x_i^k,y_i^k,\lambda _i^k \right) &=f_i\left( x_i^k,y_i^k \right)\\
    &\quad+\sum_{j\in \mathcal{N}_i}\frac{\beta}{2}\lVert w_{ij}\varphi \left( y_i^k,y_j^k \right)+\frac{\lambda _{ij}^k}{\beta} \rVert ^2 .
\end{aligned}
\end{equation}

However, the minimization steps in Eq.~\eqref{eq:primal_update} are still non-trivial, as we can not derive a closed form for $x_i^{k+1}$ and $ y_i^{k+1}$, even though $\hat{L}_i$ is quadratic. Thus, we replace the functions $\hat{L}_i,i\in[N]$ with their approximate quadratic functions $m_i$ evaluated at $x_i^{k}$ and $y_i^{k}$ as follows.
\begin{equation}\label{eq:quadratic_model}
\begin{aligned}
    m_i\left( u_i,v_i \right) =&\hat{L}_i\left( x_i^k,y_i^k,\lambda _i^k \right) +\left< \underbrace{\left[ \begin{array}{c}
    	g_{ix}\\
    	g_{iy}\\
    \end{array} \right]}_{g_i} ,\left[\begin{array}{c}
    	u_i\\
    	v_i\\
    \end{array} \right] \right>\\
    &+\frac{1}{2}\left< \left[ \begin{array}{c}
    	u_i\\
    	v_i\\
    \end{array} \right] ,\underbrace{\left[ \begin{matrix}
    	A_i&		C_i\\
    	C_{i}^{\top}&		B_i\\
    \end{matrix} \right]}_{M_i} \left[ \begin{array}{c}
    	u_i\\
    	v_i\\
    \end{array} \right] \right>,     
\end{aligned}
\end{equation}
where $(u_i, v_i) \in T_{x_i} \mathcal{X}_i \times T_{y_i} \mathcal{Y}_i$ are the tangent vectors, $g_i \triangleq grad\ f_i(x_i, y_i)$ is the local Riemannian gradient. The user-specified linear map $M_i \succ 0$ serves as an approximation of the local Riemannian Hessian and is assumed to be symmetric and positive definite. In particular, the approximation of Riemannian Hessian can be obtained via the Riemannian Levenberg--Marquardt (LM) method~\cite[Chapter~8]{absil2009optimization}, i.e., $M_i = {J_i} ^\top J_i + \mu I$, where $J_i$ is the Jacobian of agent $i$'s measurement residuals and $\mu > 0$ is a regularization parameter that ensures $M_i$ to be positive definite.

Then, we aim to compute an update for all variables by approximately minimizing $m_i$. However, directly applying a Newton-like method to all variables is computationally prohibitive because of the large dimensionality of the Riemannian Hessian matrix $M_i$. As a result, we use the Schur complement method \cite{g2o,ddfsam2,Tian2022DistributedRO} to decouple the public and private variables so as to calculate the inverse of the Hessian matrix efficiently. While the Schur complement method can also be computationally expensive, our experimental results demonstrate that it significantly reduces the average computation time while maintaining comparable accuracy.

Specifically, we minimize the approximate function $m_i$ in Eq.~\eqref{eq:quadratic_model} with $v_i$ being fixed to eliminate private vector $u_i$. Setting the gradient of $m_i(u_i,v_i)$ with respect to $u$ to zero yields $\nabla_{u}m_i(u_i,v_i)=g_{ix}+A_iu+Cv_i=0$. Letting $u_i^* \triangleq \mathop{\arg\min}_{u_i} m_i(u_i,v_i)$ denote the optimal private vector conditioned on the public vector, we have
\begin{equation}\label{eq:private_delta}
    u_i^*\left( v_i \right) =-A_{i}^{-1}\left( C_iv_i+g_{ix} \right) .
\end{equation}
Next, we define the reduced second-order approximation as $h_i(v_i) \triangleq m_i(u_i^*(v_i), v_i)$, which depends solely on the public vector $v_i$, i.e.,
\begin{equation}\label{eq:hivi}
\begin{split}
        h_i(v_i)=&\hat{L}_i\left( x_i,y_i \right) -\frac{1}{2}\left< g_{ix},A_{i}^{-1}g_{ix} \right>\\
        &+\left< \hat{g}_i,v_i \right> +\frac{1}{2}\left< v_i,\hat{H}_iv_i \right>, 
\end{split}
\end{equation}
where the reduced gradient $\hat{g}_i$ and the reduced Hessian $\hat{H}_i$ are defined in a similar way, respectively, as follows:
\begin{equation}
    \hat{g}_i=g_{iy}-C_{i}^{\top}A_{i}^{-1}g_{ix},
\end{equation}
\begin{equation}
    \hat{H}_i=B_i-C_{i}^{\top}A_{i}^{-1}C_i.
\end{equation}
Further, we can minimize $h_i(v_i)$ by computing $v_i= \hat{H}_{i}^{-1}\hat{g}_i$, and in turn minimize $m_i$ by updating private variables $x$ with $u_i^*$ (c.f., Line 10-13 in Alg.~\ref{alg:NT1}). 

The behavior of consensus strategies on Riemannian manifolds differs significantly from that in Euclidean spaces due to the effect of curvature. To establish a formal procedure for achieving consensus, 
We measure consistency error based on the Fréchet Mean as defined in \cite{tron2012riemannian}, and the Riemannian gradient $g_{iy}$ of $\hat{L}_i$ on $y$ is presented as follows,
\begin{equation}
\begin{aligned}
    g_{iy}&=grad_yf_i\left( x_i,y_i \right) \\
    &+\beta \sum_{j\in \mathcal{N}_i}{w_{ij}\left( w_{ij}\varphi ^2\left( y_i,y_j \right) +\frac{\lambda _{ij}}{\beta} \right) grad_{y_i}\varphi ^2\left( y_i,y_j \right)},
\end{aligned}
\end{equation}
where $grad_yf_i\left( x_i,y_i \right)$ represents the gradient of the local objective function $f_i(x_i,y_i)$, and the second term captures the gradient of the consistency error. Note that, when geodesic distance is used for $\varphi(\cdot)$, the term $grad_{y_i}\varphi ^2\left( y_i,y_j \right)$ can be explicitly calculated as $2Log_{y_i}(y_j)$. The value of $\frac{\lambda _{ij}}{\beta}$ acts as a fixed bias term, which provides a trade-off between consensus of public variables and local updates.

\begin{algorithm}[t]
    \renewcommand{\algorithmicrequire}{\textbf{Input:}}
    \renewcommand{\algorithmicensure}{\textbf{Output:}}
	\caption{\textsc{Decentralized Riemannian Approximate Newton Method (DRAN)} }
	\label{alg:NT1}
	\begin{algorithmic}[1]
		\small 
        \Require Communication networks $\mathcal{G}^c=(\mathcal{V}^c,\mathcal{E}^c)$, initial state $x^0, y^0,\lambda^0=\boldsymbol{0}$, $K$, stopping criteria parameter $\epsilon_u$, $\epsilon_v$, $\epsilon_{\varphi}$
	    \Ensure Globally consistent object-based maps and optimized trajectories
		\For{iteration $k = 0, 1, \hdots,K $}
			\For{each agent $i$ \textbf{in parallel}}
                    \State Exchange estimated separators, object with neighbors
                        \For{object received from neighbors $j\in \mathcal{N}_i$}
                            \If{object is previously unrecognized by robot $i$}
                                \State Initialize the object pose $y_{i}^k$ with $y_{j}^k$
                            \EndIf
                        \EndFor
                    \State Update dual variables $\lambda_i^{k+1}$ as \eqref{eq:lambda_update}
                    \State Computes reduced gradient $\hat{g}_i$ and reduced Hessian $\hat{H}_i$
                    \State Computes the $v^{k+1}_i$ as 
                    $v^{k+1}_i=-\hat{H}_{i}^{-1}\hat{g}_i$
                    \State Computes the $u^{k+1}_i$ as \eqref{eq:private_delta}
                    \State \label{alg:update} Update public variables and private variables as 
                    \[y_i^{k+1} = Retr_{y_i^{k}}(-\alpha v_i^{k+1}),\ x_i^{k+1} = Retr_{x_i^{k}}(-\gamma u_i^{k+1})\]  
		      \EndFor
                \If{stopping criteria are satisfied}
                    \State \textbf{break}
                \EndIf
		\EndFor
	\end{algorithmic}
\end{algorithm}

The proposed Decentralized Riemannian Approximate Newton (DRAN) method (c.f., Alg.~\ref{alg:NT1}) achieves accelerated convergence by gradually aligning local second-order approximations with global Newton directions via Riemannian consensus. Specifically, each robot $i$ locally solves the equation $\hat{H}_{i}v_i= \hat{g}_i$. Summing it over all robots yields $\sum_{i=1}^{N}\hat{H}_iv_i=\sum_{i=1}^{N}\hat{g}_i$. Therefore, when $v_i$ becomes sufficiently consistent across robots, it approximates the global Newton direction $\Bar{v}=(\sum_{i=1}^{N}\hat{H}_i)^{-1}\sum_{i=1}^{N}\hat{g}_i$, which enables efficient and scalable distributed optimization. 

\begin{remark}
     Unlike \cite{Choudhary2015}, which relies on a generic Gauss-Newton solver, our method explicitly exploits the Riemannian geometry of the problem and employs a Newton-like update tailored to the structure of the PGO problem, leading to faster convergence. Additionally, in contrast to \cite{Tian2022DistributedRO}, the local Hessian matrix and local gradient do not need to be communicated to neighboring robots. Moreover, the consensus scheme used in Lin~\emph{et~al.}\cite{9442938} can not be employed to achieve consistency in Riemannian space, which complicates algorithm design and implementation. Thus, we construct a residual function directly defined on Riemannian space (c.f., Eq.~\eqref{eq:residual_fun}) to ensure the consistency of public variables of all nodes. 
\end{remark}

\subsection{Implementation of DRAN}

To ensure robust convergence and adaptability in dynamic environments, we implement DRAN with a specific focus on high-quality initialization, flexible communication, and rigorous termination criteria.

\subsubsection{Two-Stage Distributed Initialization}
Given the non-convex nature of PGO on the $SE(d)$ manifold, the quality of initial estimates is critical for convergence. We adopt a \textit{two-stage distributed initialization} strategy extending the chordal relaxation approach \cite{Carlone2015Initialization}. 
First, we relax the rotation synchronization into a linear least-squares problem, solving it distributedly to obtain initial rotations. 
Second, fixing these rotations, we solve for translations via a distributed Gauss-Seidel method. 
This decoupled procedure efficiently provides a high-quality initial guess $(x^0, y^0)$ for the subsequent Riemannian optimization, significantly reducing the risk of getting stuck in local minima.

\subsubsection{Dynamic Map Consensus Mechanism}
In decentralized settings, partial observability is inevitable. To address this, we integrate a dynamic map consensus mechanism into each communication round. 
Robots exchange estimated separators and object poses with neighbors. When a robot encounters a previously unobserved object ID, it initializes its local estimate using the neighbor's data (see Line 3-8 in Alg.~\ref{alg:NT1}). 
Crucially, robots act as information relays, allowing map estimates to propagate across the network within $d-1$ rounds (where $d$ is the network diameter). 
This strategy ensures global map consistency even under sparse, topology-varying communication independent of the physical interaction graph. This framework is versatile; in object-free scenarios, the algorithm seamlessly degenerates to standard distributed PGO by omitting the object-related cost terms $f^{\text{obj}}$, optimizing only private trajectories via separator exchange.

\subsubsection{Termination Criteria}
The optimization terminates when either the \textit{optimality} or \textit{feasibility} condition is satsfied, i.e.,
\begin{itemize}
    \item \textbf{Optimality:} The update steps on the manifold become negligible, i.e., $\|u_i\| \le \epsilon_u$ and $\|v_i\| \le \epsilon_v$, indicating convergence to a local minimum.
    \item \textbf{Feasibility:} The consensus error of shared variables drops below a tolerance, i.e., $\varphi^2(y_i, y_j) \le \epsilon_{\varphi}$, ensuring geometric consistency.
\end{itemize}
In our experiments, we set thresholds $\epsilon_u = 10^{-2}$, $\epsilon_v = 10^{-1}$, $\epsilon_{\varphi} = 10^{-1}$, and penalty parameters $\beta=1, \eta=0.1$. A safety cap of $K_{\max}=500$ iterations is enforced to bound computation time.

%% file: convergence_analysis_3.tex
\section{Convergence Analysis}
\label{sec:convergence}

In this section, we analyze the proposed decentralized Riemannian approximate Newton method. 
The purpose of the analysis is threefold. 
First, we show that the Levenberg--Marquardt-type approximate Hessian and the Schur complement reduction are well-defined and preserve positive definiteness on the tangent spaces. 
Second, we prove that the reduced approximate Newton direction provides a curvature-preconditioned descent direction and leads to a non-asymptotic first-order stationarity bound with an explicit curvature-dependent constant. 
Third, for time-varying communication graphs, we establish a tracking bound showing how the distributed reduced Newton direction approaches the centralized reduced Newton direction as a function of the graph mixing rate and the temporal variation of local curvature information.

\subsection{Notation and Preliminaries}

For the convergence analysis, we collect only the notation needed later. 
Let
    $z_i \triangleq (x_i,y_i)\in \mathcal M_i\triangleq \mathcal X_i\times\mathcal Y_i$
denote the local state of robot $i$, and let $\hat L_i(z_i)$ be the local augmented objective defined in the previous section, with the dual variable omitted for notational simplicity. 
At iteration $k$, define the tangent step and the Riemannian gradient as
\[
    \xi_i^k
    \triangleq
    \begin{bmatrix}
        u_i^k\\
        v_i^k
    \end{bmatrix}
    \in T_{z_i^k}\mathcal M_i,
    \qquad
    g_i^k
    \triangleq
    \operatorname{grad}\hat L_i(z_i^k)
    =
    \begin{bmatrix}
        g_{ix}^k\\
        g_{iy}^k
    \end{bmatrix}.
\]

As introduced in the method section, DRAN builds the local quadratic model
\begin{equation}
\label{eq:conv_local_model}
    m_i^k(\xi_i)
    =
    \hat L_i(z_i^k)
    +
    \langle g_i^k,\xi_i\rangle
    +
    \frac{1}{2}
    \langle \xi_i,M_i^k\xi_i\rangle ,
\end{equation}
where
\begin{equation}
\label{eq:conv_block_hessian}
    M_i^k=
    \begin{bmatrix}
        A_i^k & C_i^k\\
        (C_i^k)^\top & B_i^k
    \end{bmatrix}
\end{equation}
is the local LM-type Hessian approximation on $T_{z_i^k}\mathcal M_i$. 
In particular,
\begin{equation}
\label{eq:conv_lm_hessian}
    M_i^k=(J_i^k)^\top J_i^k+\mu_i^k I,
    \qquad
    \mu_i^k>0 .
\end{equation}

The reduced approximate Newton direction and the recovered private direction are
\begin{equation}
\label{eq:conv_local_newton}
    v_{i,N}^k
    =
    -(\hat H_i^k)^{-1}\hat g_i^k,
    \qquad
    u_{i,N}^k
    =
    -(A_i^k)^{-1}(C_i^k v_{i,N}^k+g_{ix}^k).
\end{equation}
Equivalently, the full tangent direction satisfies
\begin{equation}
\label{eq:conv_full_newton_equiv}
    \xi_{i,N}^k
    \triangleq
    \begin{bmatrix}
        u_{i,N}^k\\
        v_{i,N}^k
    \end{bmatrix}
    =
    -(M_i^k)^{-1}g_i^k .
\end{equation}
The corresponding Riemannian update is written as
\begin{equation}
\label{eq:conv_retraction_update}
    z_i^{k+1}
    =
    \operatorname{Retr}_{z_i^k}(\alpha \xi_{i,N}^k),
    \qquad
    0<\alpha\le 1 .
\end{equation}

Throughout this section, $\xi_{i,N}^k$ denotes the descent direction. 
Thus, if one instead defines $d_i^k=(M_i^k)^{-1}g_i^k$, the update should equivalently be written as 
$z_i^{k+1}=\operatorname{Retr}_{z_i^k}(-\alpha d_i^k)$.

\subsection{Positive Definiteness of the Reduced Hessian}

We first show that the approximate Hessian used in DRAN remains positive definite even when $(J_i^k)^\top J_i^k$ is singular.

\begin{assumption}[Bounded residual Jacobian and LM damping]
\label{ass:conv_lm}
For every robot $i$ and iteration $k$, the local residual Jacobian satisfies $\|J_i^k\|\le \bar J$ ,
and the damping parameter satisfies
\[
    0<\underline \mu \le \mu_i^k\le \bar \mu <+\infty .
\]
\end{assumption}

\begin{lemma}[Positive definiteness of the LM approximation]
\label{lem:conv_lm_spd}
Under Assumption~\ref{ass:conv_lm}, the approximate Hessian
\[
    M_i^k=(J_i^k)^\top J_i^k+\mu_i^k I
\]
satisfies
\begin{equation}
\label{eq:conv_M_bounds}
    c_m I
    \preceq
    M_i^k
    \preceq
    c_M I,
\end{equation}
where $c_m=\underline\mu$, $c_M=\bar J^2+\bar\mu .$
\end{lemma}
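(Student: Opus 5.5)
The plan is to bound the Rayleigh quotient of $M_i^k$ over the tangent space $T_{z_i^k}\mathcal M_i$, using the inner product induced by the Riemannian metric. We view $J_i^k$ as a linear map from $T_{z_i^k}\mathcal M_i$ into the residual space, and $(J_i^k)^\top$ as its adjoint with respect to the metric. Then $M_i^k$ is a self-adjoint operator on the tangent space. For any tangent vector $\xi\in T_{z_i^k}\mathcal M_i$ we expand
\[
\langle \xi, M_i^k\xi\rangle = \|J_i^k\xi\|^2 + \mu_i^k\|\xi\|^2 .
\]
This one identity drives both inequalities in \eqref{eq:conv_M_bounds}.

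For the lower bound, we drop the nonnegative term $\|J_i^k\xi\|^2\ge 0$. Assumption~\ref{ass:conv_lm} gives $\mu_i^k\ge\underline\mu$, so
\[
\langle\xi,M_i^k\xi\rangle\ge\underline\mu\|\xi\|^2 ,
\]
that is, $M_i^k\succeq c_m I$ with $c_m=\underline\mu>0$. This holds even when $(J_i^k)^\top J_i^k$ is singular, which is exactly the point stressed before the lemma.

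For the upper bound, we use the operator-norm inequality $\|J_i^k\xi\|\le\|J_i^k\|\,\|\xi\|\le\bar J\|\xi\|$ together with $\mu_i^k\le\bar\mu$. This gives
\[
\langle\xi,M_i^k\xi\rangle\le(\bar J^2+\bar\mu)\|\xi\|^2=c_M\|\xi\|^2 .
\]
Since $M_i^k$ is self-adjoint, the two quadratic-form bounds are equivalent to the Loewner-order statement $c_mI\preceq M_i^k\preceq c_MI$. They also imply that every eigenvalue of $M_i^k$ lies in $[c_m,c_M]$, so $M_i^k$ is invertible and $\|(M_i^k)^{-1}\|\le 1/c_m$.

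The only step that needs care is the geometric bookkeeping rather than the inequalities themselves. The transpose $(J_i^k)^\top$ must be read as the adjoint with respect to the Riemannian metric on $T_{z_i^k}\mathcal M_i$, and $\|J_i^k\|$ must be the corresponding induced operator norm. With these conventions $(J_i^k)^\top J_i^k$ is self-adjoint and positive semidefinite, and the identity above is legitimate. If one works instead in local coordinates with a non-identity metric matrix, I would first change to an orthonormal basis of the tangent space. In that basis the metric is Euclidean and the argument goes through verbatim.
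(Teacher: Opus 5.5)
Your proposal is correct and follows the paper's own argument. Both expand $\langle \xi, M_i^k\xi\rangle=\|J_i^k\xi\|^2+\mu_i^k\|\xi\|^2$, drop the nonnegative Jacobian term for the lower bound, and use $\|J_i^k\|\le\bar J$ and $\mu_i^k\le\bar\mu$ for the upper bound; your point about reading $(J_i^k)^\top$ as the metric adjoint is a useful clarification the paper leaves implicit.
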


\noindent\emph{Proof.} The proof is provided in Appendix~\ref{app:proof:lem:conv-lm-spd}.

\begin{lemma}[Positive definiteness of the Schur complement]
\label{lem:conv_schur_spd}
Suppose $M_i^k$ satisfies \eqref{eq:conv_M_bounds} and is partitioned as in \eqref{eq:conv_block_hessian}. 
Then $A_i^k\succ0$, and the reduced Hessian
\[
    \hat H_i^k
    =
    B_i^k
    -
    (C_i^k)^\top(A_i^k)^{-1}C_i^k
\]
is symmetric positive definite. 
Moreover,
\begin{equation}
\label{eq:conv_reduced_bounds}
    c_m I
    \preceq
    \hat H_i^k
    \preceq
    c_M I .
\end{equation}
\end{lemma}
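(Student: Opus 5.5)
The plan is to work entirely with block-matrix algebra, using \eqref{eq:conv_M_bounds} from Lemma~\ref{lem:conv_lm_spd} as the only input. There are three steps: extract the bounds on the diagonal block $A_i^k$, use the inverse of $M_i^k$ to identify $(\hat H_i^k)^{-1}$ as a principal block, and then transfer the spectral bounds.

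\textbf{Step 1: bounds on $A_i^k$.} Restrict \eqref{eq:conv_M_bounds} to tangent vectors of the form $\xi=(u,0)$. This gives $c_m\|u\|^2\le\langle u,A_i^k u\rangle\le c_M\|u\|^2$. Since $c_m=\underline\mu>0$, we get $A_i^k\succ0$, so $(A_i^k)^{-1}$ exists and $\hat H_i^k$ is well defined. Symmetry of $\hat H_i^k$ follows because $B_i^k$ and $A_i^k$ are symmetric, hence so is $(C_i^k)^\top(A_i^k)^{-1}C_i^k$.

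\textbf{Step 2: the Schur complement as an inverse block.} Use the block inverse formula
\[
(M_i^k)^{-1}=\begin{bmatrix} * & * \\ * & (\hat H_i^k)^{-1}\end{bmatrix},
\]
which is valid once $A_i^k\succ0$ and $\hat H_i^k$ is invertible. Invertibility of $\hat H_i^k$ is itself a consequence of $M_i^k\succ0$, via the congruence factorization
\[
M_i^k = L^\top \operatorname{diag}(A_i^k,\hat H_i^k)\,L, \qquad L=\begin{bmatrix} I & (A_i^k)^{-1}C_i^k\\ 0 & I\end{bmatrix}.
\]
By Sylvester's law of inertia, $\hat H_i^k\succ0$. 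This settles positive definiteness.

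\textbf{Step 3: spectral bounds.} Inverting \eqref{eq:conv_M_bounds} gives $c_M^{-1}I\preceq (M_i^k)^{-1}\preceq c_m^{-1}I$. A principal block inherits bounds of this type, so $c_M^{-1}I\preceq(\hat H_i^k)^{-1}\preceq c_m^{-1}I$. Inverting once more yields $c_m I\preceq\hat H_i^k\preceq c_M I$.

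Alternatively, the upper bound follows directly from $\hat H_i^k\preceq B_i^k\preceq c_M I$, since the subtracted term $(C_i^k)^\top(A_i^k)^{-1}C_i^k$ is PSD. The lower bound also has a variational form: $\langle v,\hat H_i^k v\rangle=\min_u \langle (u,v),M_i^k(u,v)\rangle\ge c_m\|v\|^2$, with the minimum attained at $u=-(A_i^k)^{-1}C_i^k v$, the same completing-the-square step that produced $h_i$ in \eqref{eq:hivi}. I would present this variational argument as the main proof, since it avoids the block-inverse bookkeeping.

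The only delicate point is the lower bound, because the Schur complement subtracts a positive term. The variational identity settles it cleanly. I would also remark that all operators act on the tangent spaces $T_{z_i^k}\mathcal M_i$ with the Riemannian metric, so "$\preceq$" is understood with respect to that inner product, and the argument is purely linear-algebraic on a fixed inner-product space.
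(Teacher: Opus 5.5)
Your proposal is correct and uses essentially the same argument as the paper. Restricting to $(u,0)$ gives $A_i^k\succ0$. The block-inverse identity (the lower-right block of $(M_i^k)^{-1}$ is $(\hat H_i^k)^{-1}$) gives the lower bound. The variational characterization evaluated at $u=0$, equivalently $\hat H_i^k\preceq B_i^k$, gives the upper bound.

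Your preferred variational lower bound, $\langle v,\hat H_i^k v\rangle=\min_u\langle (u,v),M_i^k(u,v)\rangle\ge c_m\|v\|^2$, is a minor and slightly cleaner variant. Since $c_m>0$, it yields positive definiteness directly and makes the Sylvester/congruence step unnecessary.
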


\noindent\emph{Proof.} The proof is provided in Appendix~\ref{app:proof:lem:conv-schur-spd}.

\subsection{Global Descent and First-Order Stationarity}

We now prove that the damped Riemannian approximate Newton step is a descent step. 
The result is global in the sense of nonconvex Riemannian optimization: it guarantees convergence to a first-order stationary point, but it does not claim quadratic convergence.

\begin{assumption}[Pullback smoothness]
\label{ass:conv_pullback_smooth}
For every robot $i$ and iteration $k$, the pullback function
\[
    \tilde L_i^k(\xi)
    \triangleq
    \hat L_i\bigl(\operatorname{Retr}_{z_i^k}(\xi)\bigr),
    \qquad
    \xi\in T_{z_i^k}\mathcal M_i,
\]
satisfies
\begin{equation}
\label{eq:conv_pullback_smooth}
    \tilde L_i^k(\xi)
    \le
    \hat L_i(z_i^k)
    +
    \langle g_i^k,\xi\rangle
    +
    \frac{L}{2}\|\xi\|^2 .
\end{equation}
\end{assumption}

\begin{assumption}[Lower bounded local augmented objective]
\label{ass:conv_lower_bound}
For every robot $i$, there exists $\hat L_i^\star>-\infty$ such that
    $\hat L_i(z_i^k)\ge \hat L_i^\star$
for all iterations $k$.
\end{assumption}

\begin{theorem}[Curvature-preconditioned descent]
\label{thm:conv_descent}
Suppose Assumptions~\ref{ass:conv_lm}--\ref{ass:conv_lower_bound} hold. 
Let $\xi_{i,N}^k=-(M_i^k)^{-1}g_i^k$ and let the update be
\[
    z_i^{k+1}
    =
    \operatorname{Retr}_{z_i^k}(\alpha \xi_{i,N}^k).
\]
If
\begin{equation}
\label{eq:conv_stepsize}
    0<\alpha<\frac{2c_m^2}{Lc_M},
\end{equation}
then
\begin{equation}
\label{eq:conv_descent_bound}
    \hat L_i(z_i^{k+1})
    \le
    \hat L_i(z_i^k)
    -
    \rho_{\rm AN}
    \|g_i^k\|^2,
\end{equation}
where
\begin{equation}
\label{eq:conv_rho_an}
    \rho_{\rm AN}
    \triangleq
    \frac{\alpha}{c_M}
    -
    \frac{L\alpha^2}{2c_m^2}
    >
    0 .
\end{equation}
\end{theorem}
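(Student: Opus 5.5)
The plan is to apply the pullback smoothness inequality of Assumption~\ref{ass:conv_pullback_smooth} at the tangent vector $\xi=\alpha\xi_{i,N}^k$. Then we bound the linear and quadratic terms separately, using the spectral bounds of Lemma~\ref{lem:conv_lm_spd}.

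Since $z_i^{k+1}=\operatorname{Retr}_{z_i^k}(\alpha\xi_{i,N}^k)$, we have $\hat L_i(z_i^{k+1})=\tilde L_i^k(\alpha\xi_{i,N}^k)$. Inequality \eqref{eq:conv_pullback_smooth} then gives
\[
\hat L_i(z_i^{k+1})\le \hat L_i(z_i^k)+\alpha\langle g_i^k,\xi_{i,N}^k\rangle+\frac{L\alpha^2}{2}\|\xi_{i,N}^k\|^2 .
\]

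For the first-order term, substitute $\xi_{i,N}^k=-(M_i^k)^{-1}g_i^k$. This gives $\langle g_i^k,\xi_{i,N}^k\rangle=-\langle g_i^k,(M_i^k)^{-1}g_i^k\rangle$. By Lemma~\ref{lem:conv_lm_spd}, $M_i^k$ is symmetric with $c_mI\preceq M_i^k\preceq c_MI$, so $c_M^{-1}I\preceq (M_i^k)^{-1}\preceq c_m^{-1}I$. Hence $\langle g_i^k,\xi_{i,N}^k\rangle\le -\|g_i^k\|^2/c_M$.

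For the second-order term, we use $\|(M_i^k)^{-1}\|\le 1/c_m$, so $\|\xi_{i,N}^k\|^2\le \|g_i^k\|^2/c_m^2$. Combining the two bounds gives
\[
\hat L_i(z_i^{k+1})\le \hat L_i(z_i^k)-\Bigl(\frac{\alpha}{c_M}-\frac{L\alpha^2}{2c_m^2}\Bigr)\|g_i^k\|^2,
\]
which is \eqref{eq:conv_descent_bound} with $\rho_{\rm AN}$ as in \eqref{eq:conv_rho_an}.

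Positivity of $\rho_{\rm AN}$ is equivalent to $\alpha\bigl(1/c_M-L\alpha/(2c_m^2)\bigr)>0$. For $\alpha>0$ this reduces to $\alpha<2c_m^2/(Lc_M)$, which is exactly \eqref{eq:conv_stepsize}. Assumption~\ref{ass:conv_lower_bound} is not needed for the one-step inequality. It only matters afterwards, when the inequality is telescoped to obtain the stationarity rate.

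There is no genuine obstacle here. The one point needing care is that the tangent space at $z_i^k$ carries the Riemannian metric, and $M_i^k$ is a self-adjoint operator with respect to that metric. Under this convention the eigenvalue bounds for $(M_i^k)^{-1}$ hold in that inner product, and the quadratic-form and operator-norm inequalities apply without change.
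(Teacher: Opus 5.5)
Your proposal is correct and follows the paper's proof essentially step for step. You apply the pullback smoothness bound at $\alpha\xi_{i,N}^k$, lower-bound $\langle g_i^k,(M_i^k)^{-1}g_i^k\rangle$ by $\|g_i^k\|^2/c_M$, upper-bound $\|\xi_{i,N}^k\|$ by $\|g_i^k\|/c_m$, and get positivity of $\rho_{\rm AN}$ from the stepsize condition; your remark that the lower-boundedness assumption is only needed later, for the telescoping in Corollary~\ref{cor:conv_stationarity}, is also accurate.
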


\noindent\emph{Proof.} The proof is provided in Appendix~\ref{app:proof:thm:conv-descent}.

\begin{corollary}[Non-asymptotic first-order stationarity]
\label{cor:conv_stationarity}
Under the same conditions of Theorem~\ref{thm:conv_descent}, for every robot $i$ and every $K\ge1$,
\begin{equation}
\label{eq:conv_stationarity}
    \min_{0\le k\le K-1}
    \|\operatorname{grad}\hat L_i(z_i^k)\|^2
    \le
    \frac{
        \hat L_i(z_i^0)-\hat L_i^\star
    }{
        \rho_{\rm AN}K
    } .
\end{equation}
\end{corollary}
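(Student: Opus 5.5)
The plan is the standard telescoping argument built on the per-iteration descent inequality of Theorem~\ref{thm:conv_descent}. Fix a robot $i$ and a horizon $K\ge1$. Under the step-size condition \eqref{eq:conv_stepsize}, the constant $\rho_{\rm AN}$ in \eqref{eq:conv_rho_an} is strictly positive. For each $k=0,\dots,K-1$, Theorem~\ref{thm:conv_descent} gives
\[
    \rho_{\rm AN}\|g_i^k\|^2 \le \hat L_i(z_i^k)-\hat L_i(z_i^{k+1}),
\]
where $g_i^k=\operatorname{grad}\hat L_i(z_i^k)$ by the notation fixed in the preliminaries.

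Summing these $K$ inequalities, the right-hand side telescopes to $\hat L_i(z_i^0)-\hat L_i(z_i^K)$. Assumption~\ref{ass:conv_lower_bound} bounds $\hat L_i(z_i^K)$ below by $\hat L_i^\star$, which gives
\[
    \rho_{\rm AN}\sum_{k=0}^{K-1}\|\operatorname{grad}\hat L_i(z_i^k)\|^2 \le \hat L_i(z_i^0)-\hat L_i^\star .
\]
The minimum over $k$ of the squared gradient norms is at most their average. Bounding the minimum by $\frac1K\sum_{k=0}^{K-1}$ and dividing by $\rho_{\rm AN}K>0$ yields \eqref{eq:conv_stationarity}.

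The only point needing care is consistency of the objective along the iterations. The local augmented objective $\hat L_i$ depends on the dual variables $\lambda_i^k$ and on the neighbors' separators and object copies, which change between iterations. The telescoping step requires that the same function $\hat L_i$ be evaluated at $z_i^k$ and $z_i^{k+1}$ throughout. I would rely on the convention stated in the preliminaries, where the dual variable is omitted and $\hat L_i$ is treated as a fixed function of $z_i$. I would state explicitly that the corollary is understood in that frozen sense, just as Theorem~\ref{thm:conv_descent} and Assumption~\ref{ass:conv_lower_bound} are.

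Beyond this interpretive point the argument is routine, since all the analytic content is already contained in Theorem~\ref{thm:conv_descent}. In particular, the lower bound $\|\xi_{i,N}^k\|\le\|g_i^k\|/c_m$ and the descent estimate $\langle g_i^k,\xi_{i,N}^k\rangle\le-\|g_i^k\|^2/c_M$, obtained from Lemma~\ref{lem:conv_lm_spd}, are not needed again here.
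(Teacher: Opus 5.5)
Your proposal is correct and follows the same route as the paper: you sum the descent inequality of Theorem~\ref{thm:conv_descent} over $k=0,\dots,K-1$, lower-bound $\hat L_i(z_i^K)$ by $\hat L_i^\star$, and bound the minimum by the average. Your caveat is also well taken: the telescoping is valid only if $\hat L_i$ is held fixed across iterations, with the duals and the neighbors' separators and object copies frozen, and the paper's proof relies on this same convention without stating it.
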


\noindent\emph{Proof.} The proof is provided in Appendix~\ref{app:proof:cor:conv-stationarity}.

\subsection{Effect of Second-Order Information}
\label{subsec:second_order_effect}

While Corollary~\ref{cor:conv_stationarity} establishes a standard worst-case stationarity rate of $O(1/K)$ due to the nonconvexity of the global problem, the second-order approximation significantly alters the local geometry. Specifically, in a regular local region, the LM/Gauss--Newton approximation in DRAN acts as a Riemannian preconditioner that improves the effective condition number of the local model.

Furthermore, since relative-pose measurements are invariant to global $SE(d)$ transformations, the inherent gauge freedom renders the unconstrained PGO Hessian singular. Therefore, our subsequent local conditioning analysis is strictly conducted on the gauge-fixed tangent space, achieved by anchoring a reference pose, adding an equivalent prior, or restricting the analysis to the observable subspace.

Let $H_i^k$ denote the Riemannian Hessian of the pullback objective at the origin of $T_{z_i^k}\mathcal M_i$, restricted to the gauge-fixed tangent space. 
For nonlinear least-squares PGO, this Hessian can be decomposed as
\begin{equation}
\label{eq:true_hessian_decomposition}
    H_i^k
    =
    (J_i^k)^\top J_i^k
    +
    R_i^k,
\end{equation}
where $J_i^k$ is the Jacobian of the local residuals and $R_i^k$ collects the second-order residual terms. 
The LM approximation used by DRAN is
\begin{equation}
\label{eq:lm_approx_again}
    M_i^k
    =
    (J_i^k)^\top J_i^k+\mu_i^k I,
    \qquad
    \mu_i^k\ge 0.
\end{equation}

\begin{assumption}[Local regularity on the gauge-fixed tangent space]
\label{ass:local_regular_gauge_fixed}
In a neighborhood of a nondegenerate local solution to the PGO Problem~\ref{problem_formulation2}, restricted to the gauge-fixed tangent space, the pullback Hessian satisfies
\begin{equation}
\label{eq:gauge_fixed_hessian_bounds}
    m_H I
    \preceq
    H_i^k
    \preceq
    L_H I,
    \qquad
    0<m_H\le L_H<+\infty .
\end{equation}
\end{assumption}

\begin{assumption}[Accuracy of the LM/Gauss--Newton approximation]
\label{ass:gn_lm_accuracy}
In the same local neighborhood, restricted to the gauge-fixed tangent space, the second-order residual term and the LM damping satisfy
\begin{equation}
\label{eq:residual_hessian_bound}
    \|R_i^k\|
    =
    \left\|
    H_i^k-(J_i^k)^\top J_i^k
    \right\|
    \le
    \varepsilon_H^k,
\end{equation}
and $0\le \mu_i^k\le \varepsilon_\mu^k$.

\end{assumption}

\begin{remark}
The two assumptions above are local and standard for nonlinear least-squares PGO\cite{Rosen19IJRR,dennis1996numerical,nocedal2006numerical}. 
After anchoring one pose, adding an equivalent prior, or restricting the analysis to the observable tangent space, a nondegenerate local solution yields a positive definite Hessian with bounded spectrum,
$m_H I\preceq H_i^k\preceq L_H I$. 
The lower bound removes unobservable or degenerate directions, while the upper bound follows from residual smoothness in a compact neighborhood\cite{Rosen19IJRR}. 
Moreover, for least-squares objectives,
$H_i^k=(J_i^k)^\top J_i^k+R_i^k$. 
The residual term $R_i^k$ is bounded within the neighborhood of the regular solution, and the LM damping is bounded by design. 
Hence $\|R_i^k\|\le\varepsilon_H^k$ and $0\le\mu_i^k\le\varepsilon_\mu^k$ quantify the local accuracy of the LM/Gauss--Newton approximation\cite{dennis1996numerical,nocedal2006numerical}.
\end{remark}

\begin{lemma}[Local spectral equivalence of the LM approximation]
\label{lem:lm_spectral_equivalence}
Suppose Assumptions~\ref{ass:local_regular_gauge_fixed} and \ref{ass:gn_lm_accuracy} hold. 
If, for some $0<\delta<1$,
\begin{equation}
\label{eq:spectral_equiv_condition}
    \varepsilon_H^k+\varepsilon_\mu^k
    \le
    \delta m_H,
\end{equation}
then
\begin{equation}
\label{eq:lm_spectral_equivalence}
    (1-\delta)H_i^k
    \preceq
    M_i^k
    \preceq
    (1+\delta)H_i^k .
\end{equation}
\end{lemma}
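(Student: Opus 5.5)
We compare $M_i^k$ and $H_i^k$ directly through their difference. By \eqref{eq:true_hessian_decomposition} and \eqref{eq:lm_approx_again}, on the gauge-fixed tangent space
\[
    M_i^k - H_i^k = \mu_i^k I - R_i^k .
\]
Call this symmetric perturbation $E_i^k$. The plan is to bound $E_i^k$ in operator norm by $\delta m_H$, and then to convert that absolute bound into a relative one using the lower spectral bound on $H_i^k$ from Assumption~\ref{ass:local_regular_gauge_fixed}.

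\emph{Step 1 (norm bound).} By Assumption~\ref{ass:gn_lm_accuracy} and the triangle inequality,
\[
    \|E_i^k\| \le \mu_i^k + \|R_i^k\| \le \varepsilon_\mu^k + \varepsilon_H^k \le \delta m_H ,
\]
where the last inequality is condition \eqref{eq:spectral_equiv_condition}. Since $E_i^k$ is symmetric (both $H_i^k$ and $(J_i^k)^\top J_i^k$ are symmetric, so $R_i^k$ is too), this gives
\[
    -\delta m_H I \preceq E_i^k \preceq \delta m_H I .
\]

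\emph{Step 2 (relative bound).} Assumption~\ref{ass:local_regular_gauge_fixed} gives $m_H I \preceq H_i^k$. Multiplying by $\delta>0$ yields $\delta m_H I \preceq \delta H_i^k$. Combining with Step 1,
\[
    -\delta H_i^k \preceq E_i^k \preceq \delta H_i^k .
\]
Adding $H_i^k$ throughout gives \eqref{eq:lm_spectral_equivalence}. The left inequality is positive definite because $\delta<1$.

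\emph{Main obstacle.} The argument is essentially routine. The step that needs care is making sure every operator lives on the same space: $H_i^k$, $J_i^k$, $R_i^k$ and the identity in $\mu_i^k I$ must all be restricted to the gauge-fixed tangent space, with the metric $\langle\cdot,\cdot\rangle$. Only then are the Loewner order and the identity $\|E\|=\max|\lambda(E)|$ for symmetric $E$ valid. I would state this explicitly, noting that $R_i^k$ is defined as $H_i^k-(J_i^k)^\top J_i^k$ on that subspace and is therefore self-adjoint.
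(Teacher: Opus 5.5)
Your proposal is correct and follows the paper's proof step for step: bound $\|M_i^k-H_i^k\|=\|\mu_i^k I-R_i^k\|\le\delta m_H$, turn this into the two-sided Loewner bound, replace $\delta m_H I$ by $\delta H_i^k$ using Assumption~\ref{ass:local_regular_gauge_fixed}, and add $H_i^k$. Your remarks that $R_i^k$ is symmetric and that all operators must act on the same gauge-fixed tangent space make explicit the justification for the norm-to-Loewner step, which the paper leaves implicit.
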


\noindent\emph{Proof.} The proof is provided in Appendix~\ref{app:proof:lem:lm-spectral-equivalence}.

\begin{theorem}[Conditioning effect of approximate Newton preconditioning]
\label{thm:conv_conditioning}
Suppose Assumptions~\ref{ass:local_regular_gauge_fixed}
and~\ref{ass:gn_lm_accuracy} hold.
If, for some $0<\delta<1$,
\[
    \varepsilon_H^k+\varepsilon_\mu^k \le \delta m_H,
\]
then $M_i^k$ is spectrally equivalent to $H_i^k$, and the preconditioned local Hessian satisfies
\begin{equation}
\label{eq:conv_preconditioned_spectrum}
    \frac{1}{1+\delta}I
    \preceq
    (M_i^k)^{-1/2}H_i^k(M_i^k)^{-1/2}
    \preceq
    \frac{1}{1-\delta}I .
\end{equation}
Consequently, the condition number of the preconditioned local model is bounded by
\begin{equation}
\label{eq:conv_preconditioned_condition}
    \kappa_{\rm AN}
    \le
    \frac{1+\delta}{1-\delta}.
\end{equation}
In contrast, an unpreconditioned Riemannian gradient step is governed by the condition number
\begin{equation}
\label{eq:conv_gradient_condition}
    \kappa_{\rm GD}
    =
    \frac{L_H}{m_H}.
\end{equation}
\end{theorem}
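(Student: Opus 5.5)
The plan is to derive everything from Lemma~\ref{lem:lm_spectral_equivalence}, which already gives the Loewner sandwich \eqref{eq:lm_spectral_equivalence} under the stated hypothesis $\varepsilon_H^k+\varepsilon_\mu^k\le\delta m_H$. This settles the first claim, spectral equivalence of $M_i^k$ and $H_i^k$. Both are symmetric positive definite on the gauge-fixed tangent space. For $H_i^k$ this is Assumption~\ref{ass:local_regular_gauge_fixed}. For $M_i^k$ it follows from $M_i^k\succeq(1-\delta)H_i^k\succeq(1-\delta)m_H I\succ0$. Hence $(M_i^k)^{\pm1/2}$ and $(H_i^k)^{\pm 1/2}$ are well defined.

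To obtain \eqref{eq:conv_preconditioned_spectrum}, I will first invert the sandwich. Since $X\mapsto X^{-1}$ is order-reversing on SPD operators, \eqref{eq:lm_spectral_equivalence} gives
\[
\tfrac{1}{1+\delta}(H_i^k)^{-1}\preceq (M_i^k)^{-1}\preceq \tfrac{1}{1-\delta}(H_i^k)^{-1}.
\]
Next I apply the congruence $X\mapsto (H_i^k)^{1/2}X(H_i^k)^{1/2}$, which preserves the Loewner order. This yields
\[
\tfrac{1}{1+\delta}I\preceq (H_i^k)^{1/2}(M_i^k)^{-1}(H_i^k)^{1/2}\preceq\tfrac{1}{1-\delta}I.
\]
Finally I transfer this to the matrix in the statement. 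The operator $(H_i^k)^{1/2}(M_i^k)^{-1}(H_i^k)^{1/2}$ and $(M_i^k)^{-1/2}H_i^k(M_i^k)^{-1/2}$ are both similar to $(M_i^k)^{-1}H_i^k$. Writing $P=(M_i^k)^{-1/2}(H_i^k)^{1/2}$, they equal $P^\top P$ and $PP^\top$, which share the same spectrum. Both are symmetric, so equal spectra give the same Loewner bounds, and this proves \eqref{eq:conv_preconditioned_spectrum}.

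An alternative route uses the Rayleigh quotient. Substituting $w=(M_i^k)^{1/2}\xi$, the bound $(1-\delta)\langle\xi,H\xi\rangle\le\langle\xi,M\xi\rangle$ gives $\langle w,M^{-1/2}HM^{-1/2}w\rangle\le\tfrac{1}{1-\delta}\|w\|^2$. The lower bound follows symmetrically. This argument avoids the inversion step entirely, so I will likely present it as the main proof, since it is shorter.

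For \eqref{eq:conv_preconditioned_condition}, define $\kappa_{\rm AN}$ as the ratio of the extreme eigenvalues of the preconditioned operator. By the bounds just proved this ratio is at most $\frac{1/(1-\delta)}{1/(1+\delta)}=\frac{1+\delta}{1-\delta}$. The statement \eqref{eq:conv_gradient_condition} is essentially definitional. An unpreconditioned Riemannian gradient step uses the identity metric, so the relevant model Hessian is $H_i^k$ itself, whose spectrum lies in $[m_H,L_H]$ by Assumption~\ref{ass:local_regular_gauge_fixed}. I will state it as the condition number bound $\kappa_{\rm GD}\le L_H/m_H$, with equality when the bounds in \eqref{eq:gauge_fixed_hessian_bounds} are taken tight, i.e.\ as the extreme eigenvalues.

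The only delicate point is keeping everything on the gauge-fixed subspace. I will note explicitly that $M_i^k$ is understood as its compression to that subspace, which is where Lemma~\ref{lem:lm_spectral_equivalence} holds. Beyond that, no step should be a genuine obstacle: the result is a direct linear-algebra corollary of the earlier lemma.
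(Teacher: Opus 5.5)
Your proposal is correct and matches the paper's proof: the Rayleigh-quotient argument with $w=(M_i^k)^{1/2}\xi$ is the paper's congruence of both sides of $(1-\delta)H_i^k\preceq M_i^k\preceq(1+\delta)H_i^k$ by $(M_i^k)^{-1/2}$, followed by the same eigenvalue-ratio bound and the same reading of $\kappa_{\rm GD}$ from $m_H I\preceq H_i^k\preceq L_H I$. Your inversion-plus-similarity route is also valid but unnecessary, and your check that $M_i^k\succeq(1-\delta)m_H I\succ 0$ fills a small gap the paper leaves implicit: it uses $(M_i^k)^{-1/2}$ without justifying invertibility, which needs an argument here because $\mu_i^k=0$ is allowed.
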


\noindent\emph{Proof.} The proof is provided in Appendix~\ref{app:proof:thm:conv-conditioning}.

\begin{remark}[Connection to the DRAN update]
\label{rem:connection_dran_conditioning}
The DRAN update is obtained by minimizing the local quadratic model
\[
    m_i^k(\xi_i)
    =
    \hat L_i(z_i^k)
    +
    \langle g_i^k,\xi_i\rangle
    +
    \frac{1}{2}\langle \xi_i,M_i^k\xi_i\rangle .
\]
Thus, the resulting full-space direction is
\[
    \xi_i^k
    =
    -(M_i^k)^{-1}g_i^k,
\]
which can be interpreted as a Riemannian preconditioned gradient direction with preconditioner $(M_i^k)^{-1}$. 
The matrix
\[
    (M_i^k)^{-1/2}H_i^k(M_i^k)^{-1/2}
\]
is the effective Hessian approximation after this preconditioning. 
If $M_i^k$ is spectrally close to $H_i^k$, this effective Hessian is close to the identity, meaning that the local curvature is approximately normalized. 
This is the theoretical mechanism by which approximate second-order information can reduce the number of optimization iterations.
\end{remark}

\begin{remark}[Interpretation of the Acceleration Effect]
Theorem~\ref{thm:conv_descent} establishes that the objective strictly decreases by at least $\rho_{\rm AN} \|g_i^k\|^2$. To intuitively understand the acceleration, consider the optimal step size $\alpha^* = \frac{c_m^2}{L c_M}$, which yields a descent factor of:
\begin{equation}
    \rho_{\rm AN}^* = \frac{1}{2L} \left( \frac{c_m}{c_M} \right)^2.
\end{equation}
When the iterates enter a regular local region, Lemma~\ref{lem:lm_spectral_equivalence} and Theorem~\ref{thm:conv_conditioning} show that the LM approximation acts as an preconditioner. By transforming the problem into the preconditioned local geometry, the curvature matrix has explicitly bounded eigenvalues from Equation~\eqref{eq:conv_preconditioned_spectrum}. Thus, the effective local bounds become $\tilde{c}_m = \frac{1}{1+\delta}$ and $\tilde{c}_M = \frac{1}{1-\delta}$. 

Consequently, the condition number of the preconditioned update is tightly bounded by $\kappa_{\rm AN} = \tilde{c}_M / \tilde{c}_m \le \frac{1+\delta}{1-\delta}$. Substituting this localized effective condition number into our descent factor gives:
\begin{equation}
    \rho_{\rm AN, local}^* \approx \frac{1}{2L} \left( \frac{\tilde{c}_m}{\tilde{c}_M} \right)^2 = \frac{1}{2L} \left( \frac{1-\delta}{1+\delta} \right)^2 \approx \frac{1}{2L},
\end{equation}
because $\delta \ll 1$. This means the DRAN update guarantees a large, stable descent step independent of the original problem's ill-conditioning. In contrast, an unpreconditioned Riemannian gradient step (GD) is bottlenecked by the raw curvature of the problem, where the effective condition number is $\kappa_{\rm GD} = L_H/m_H$. The corresponding descent factor is drastically reduced, i.e.,
\begin{equation}
    \rho_{\rm GD}^* \propto \frac{1}{2L} \left( \frac{1}{\kappa_{\rm GD}} \right)^2 \ll \frac{1}{2L}.
\end{equation}
For ill-conditioned PGO problems, $\kappa_{\rm GD}$ can be massive, forcing $\rho_{\rm GD}^*$ toward zero and causing severe zig-zagging.

Therefore, with preconditioning of the local geometry,  DRAN improves the descent bound by enlarging the effective descent factor $\rho_{\rm AN}$ to enforce $\kappa_{\rm AN} \approx 1$ and thus significantly reducing the number of required iterations and communication rounds compared to first-order methods.
\end{remark}

%% file: experiments_tie.tex
\section{Experiments}\label{sec:experiments}

In this section, we evaluate the performance of our method on PGO problems using (i) benchmark and large-scale simulation datasets and (ii) a real-world multi-robot SLAM dataset. Unless otherwise specified, our optimization algorithms were implemented in Python using the PyPose library~\cite{wang2022pypose}. The benchmark and simulation experiments were executed offline on a workstation with an Intel i7-12800HX CPU and 16\,GB RAM running Ubuntu~20.04. The real-world experiments were executed on a six-robot testbed composed of differential-drive Clearpath Dingo platforms. Each robot runs ROS~1 on an NVIDIA Jetson Orin Nano (8\,GB) with Ubuntu~20.04, and is equipped with an Intel RealSense D435i camera and its built-in BMI055 IMU for VIO, as well as AprilTag-based object identification. Inter-robot communication is realized via a router-based WiFi network using UDP. Ground-truth robot trajectories and object poses are recorded by an OptiTrack motion-capture system, which is used only for evaluation and initial frame alignment. The following section details the datasets and experimental protocols for each setting.

\vspace{-0.25cm}

\begin{table*}[!t]
\centering
\caption{EVALUATION ON PGO BENCHMARK DATASET: Performance comparison of four methods regarding objective values and the Total Transmitted Data (MB), the lower objective and transmitted data volume mean better.
}
\begin{tblr}{
  colspec={c c c c c c c c c | c c c},
  cells = {c},
  cell{1}{1} = {c=2,r=2}{},
  cell{1}{3} = {r=2}{},
  cell{1}{4} = {r=2}{},
  cell{1}{5} = {c=5}{},
  cell{1}{10} = {c=3}{},
  cell{3}{1} = {r=3}{},
  cell{6}{1} = {r=3}{},
  hline{1,9} = {-}{0.08em},
  hline{2} = {5-9,10-12}{},
  hline{3,6} = {-}{},
  hline{4-5,7-8} = {2-12}{},
}
Dataset &                & Vertices & Edges & Objective &       &         &        &                & Total Transmitted Data (MB) &             &              \\
        &                &          &       & F Init.   & F*    & DC2-PGO & DGS    & Ours           & DC2-PGO    & DGS         & Ours         \\
3D      & Parking Garage & 1661     & 6275  & 1.64      & 1.263 & 1.311   & 1.33   & \textbf{1.289} & 17.21         & 5.41          & \textbf{1.23}  \\
        & Sphere         & 2500     & 4949  & 1892      & 1687  & 1687    & 1689   & \textbf{1687}  & 4.95         & 13.73         & \textbf{1.34}  \\
        & Torus          & 5000     & 9048  & 24617     & 24227 & 24227   & 24246  & \textbf{24227} & 17.24         & 9.14         & \textbf{3.23}  \\
2D      & CSAIL          & 1045     & 1171  & 31.50     & 31.47 & 31.47   & 31.49  & \textbf{31.47} & 0.52         & 11.857         & \textbf{0.01}   \\
        & Intel          & 1228     & 1483  & 396.6     & 393.7 & 393.7   & 428.89 & \textbf{393.7} & 4.47        & \textbf{0.06}            &   0.54        \\
        & Manhattan      & 3500     & 5453  & 369.0     & 193.9 & 194.0   & 242.05 & \textbf{194.0} & 115.70        & 152.31        & \textbf{13.47} 
\end{tblr}
\vspace{-0.2cm}
\label{table:pgo}
\end{table*}

\subsection{Performance on Benchmark Datasets}\label{sec:simulation exp}

To evaluate the accuracy and efficiency of our decentralized second-order Riemannian optimization algorithm, we tested it on G2O benchmark pose-graph datasets. Although these datasets contain no objects, they include heterogeneous SLAM graphs with cross-trajectory constraints that can be used to emulate inter-robot consistency requirements. We simulate a multi-robot setting by splitting each trajectory into five segments and assigning each segment to one robot. The physical topology is defined by inter-robot measurements, and the communication topology follows it by exchanging separator variables between connected robots. Our method is fully decentralized and does not require any central node to coordinate update ordering. 
We compared our method with three state-of-the-art baselines: the centralized SE(3) synchronization method SE-Sync~\cite{Rosen19IJRR}, the Distributed Gauss--Seidel (DGS) approach~\cite{choudhary2017distributed}, and DC2-PGO~\cite{tian2021distributed}. All methods optimize the pose graph via convex relaxations of Problem~\ref{problem_formulation2}: SE-Sync and DC2-PGO solve a semidefinite relaxation of the chordal formulation, while DGS solves a distributed quadratic relaxation. For DGS, the successive overrelaxation parameter is set to 1.0 as suggested in~\cite{choudhary2017distributed}. We report the chordal objective values for all methods and the required local iterations $k$ for DGS and DC2-PGO in Table~\ref{table:pgo}, where $k$ equals the number of inter-robot communication rounds. All methods share the same distributed chordal initialization, with the Gauss-Seidel iterations capped at 50 for both rotation and translation to limit initialization communication.


Fig.~\ref{fig:g2o} visualizes the optimization results on the Sphere and Tours datasets obtained by our method. The results show that the five robots successfully reconstruct the overall pose graph by exchanging only separator variables at inter-robot boundaries. As reported in Table~\ref{table:pgo}, our method attains objective values that are very close to the centralized optimum while substantially reducing communication. Specifically, our method matches the best objective value on Sphere, Torus, CSAIL, and Intel, and remains within $2.06\%$ and $0.06\%$ of the best value on Parking Garage and Manhattan, respectively. In terms of transmitted data, our method reduces the total communication volume from $160.09$ MB to $19.82$ MB compared with DC2-PGO, corresponding to an $87.62\%$ reduction. Compared with DGS, the total transmitted data is reduced from $192.51$ MB to $19.82$ MB, corresponding to an $89.70\%$ reduction. On individual datasets, the communication reduction over DC2-PGO ranges from $72.93\%$ to $98.08\%$. These results indicate that DRAN preserves near-centralized estimation accuracy while significantly lowering communication costs in decentralized multi-robot PGO.


\begin{figure}
    \centering
    \includegraphics[width=\columnwidth]{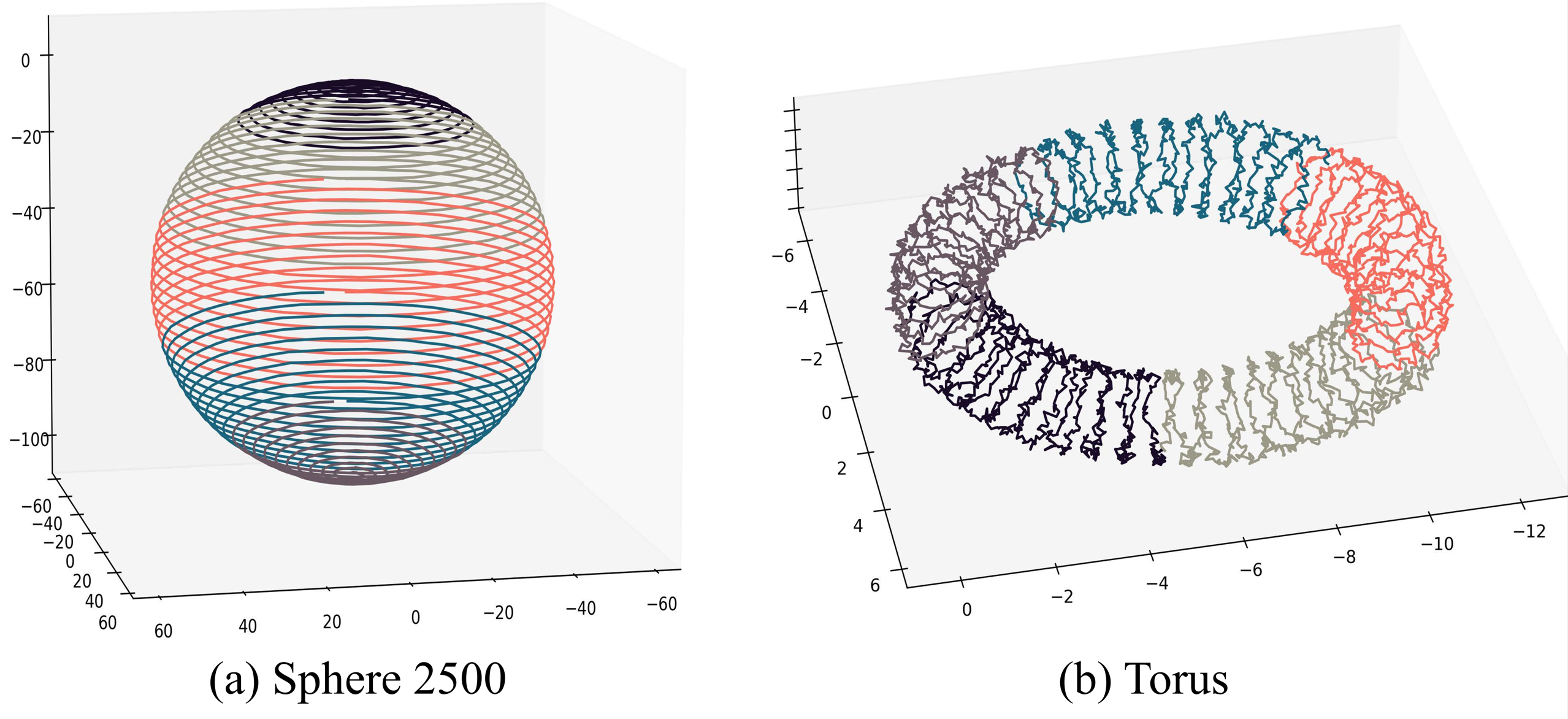}
    \caption{Illustration of two pose graphs optimized using our proposed method. Each graph is distinguished by five colored lines representing the poses of five different robots. (a) corresponds to the Sphere2500 dataset, consisting of 2500 poses and 4949 edges; (b) corresponds to the Torus dataset, consisting of 5000 poses and 9048 edges.}
    \label{fig:g2o}
\end{figure}

\begin{figure}
    \centering
    \includegraphics[width=\columnwidth]{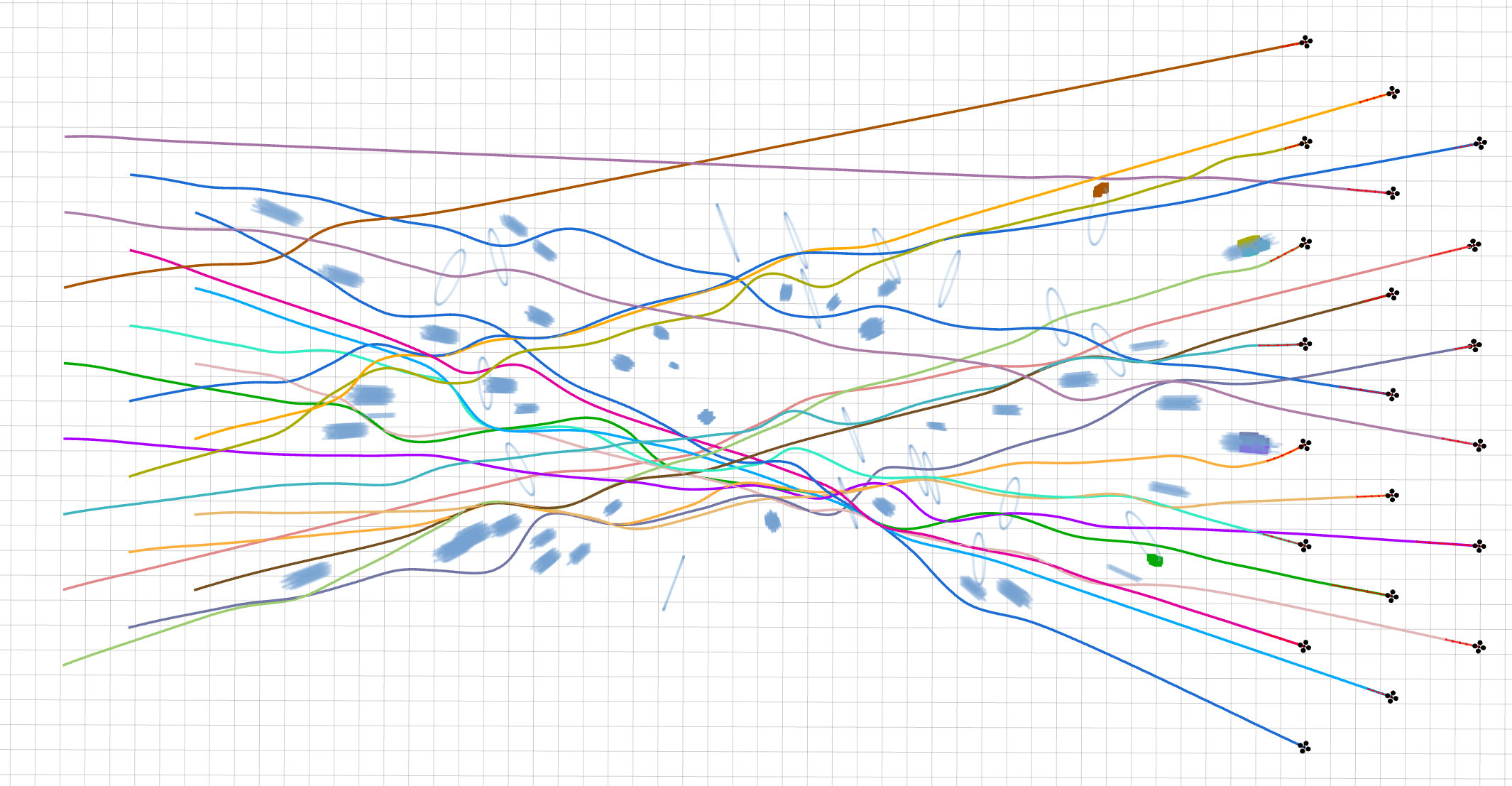}
    \caption{Multi-UAV trajectory estimation and object localization experiments. Colored lines: the trajectories of different UAVs. Star symbols: goal points of the robot. Colored polyhedra: obstacle locations.}
    \label{fig:trajectory20}
\end{figure}

\begin{figure*}
    \centering
    \subfloat[SR-ATE]{
        \label{fig:SR}
        \includegraphics[width=5.6cm]{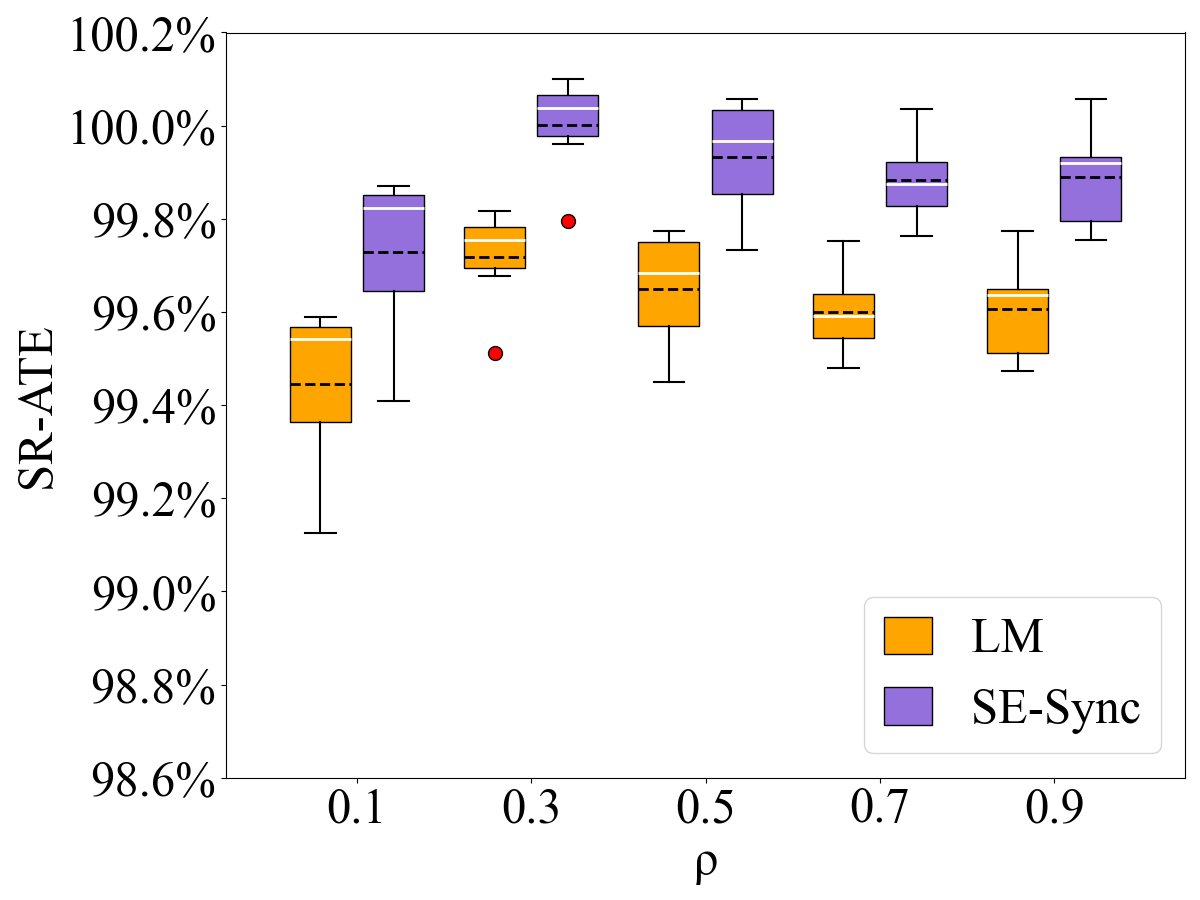}}
    \subfloat[Bias convergence]{
        \label{fig:Bias}
        \includegraphics[width=5.5cm]{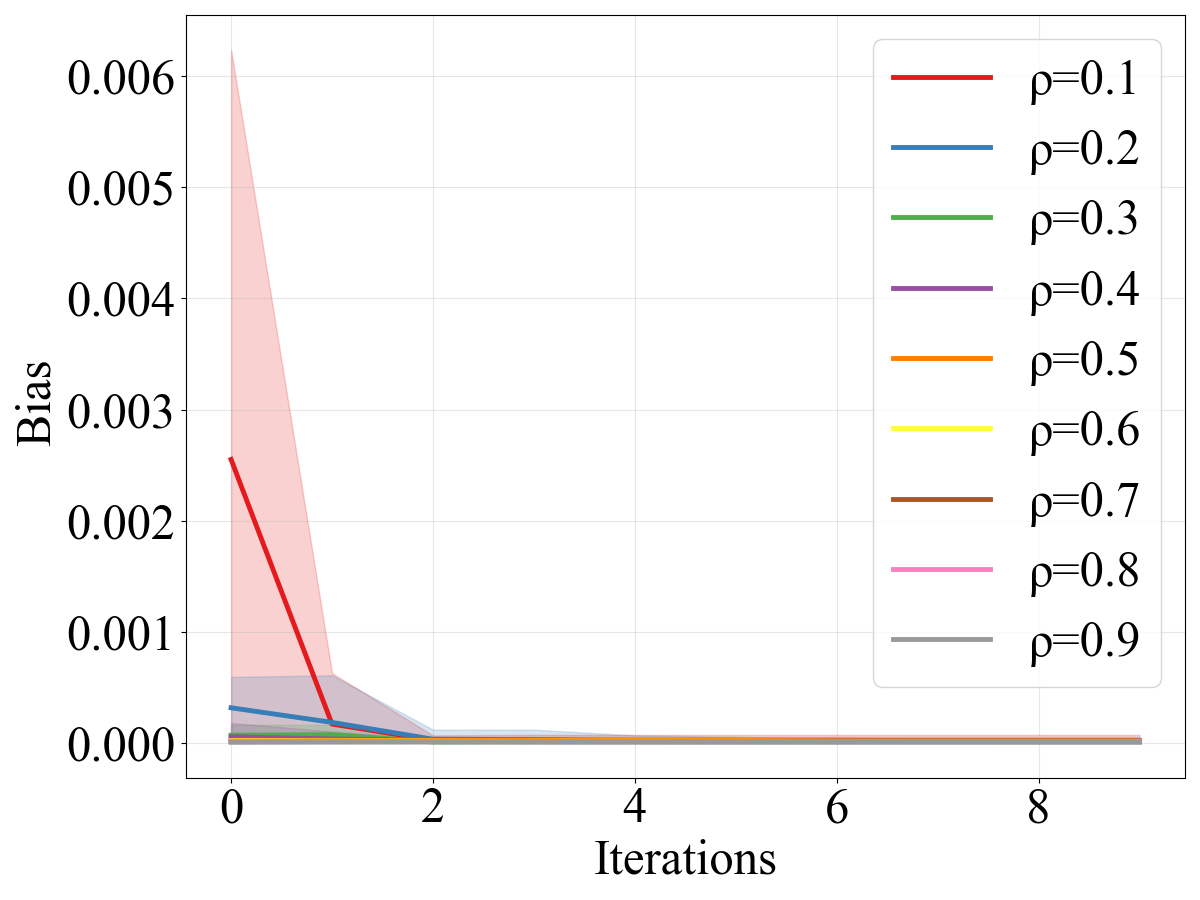}}
    \subfloat[CE convergence]{
        \label{fig:CE}
        \includegraphics[width=5.5cm]{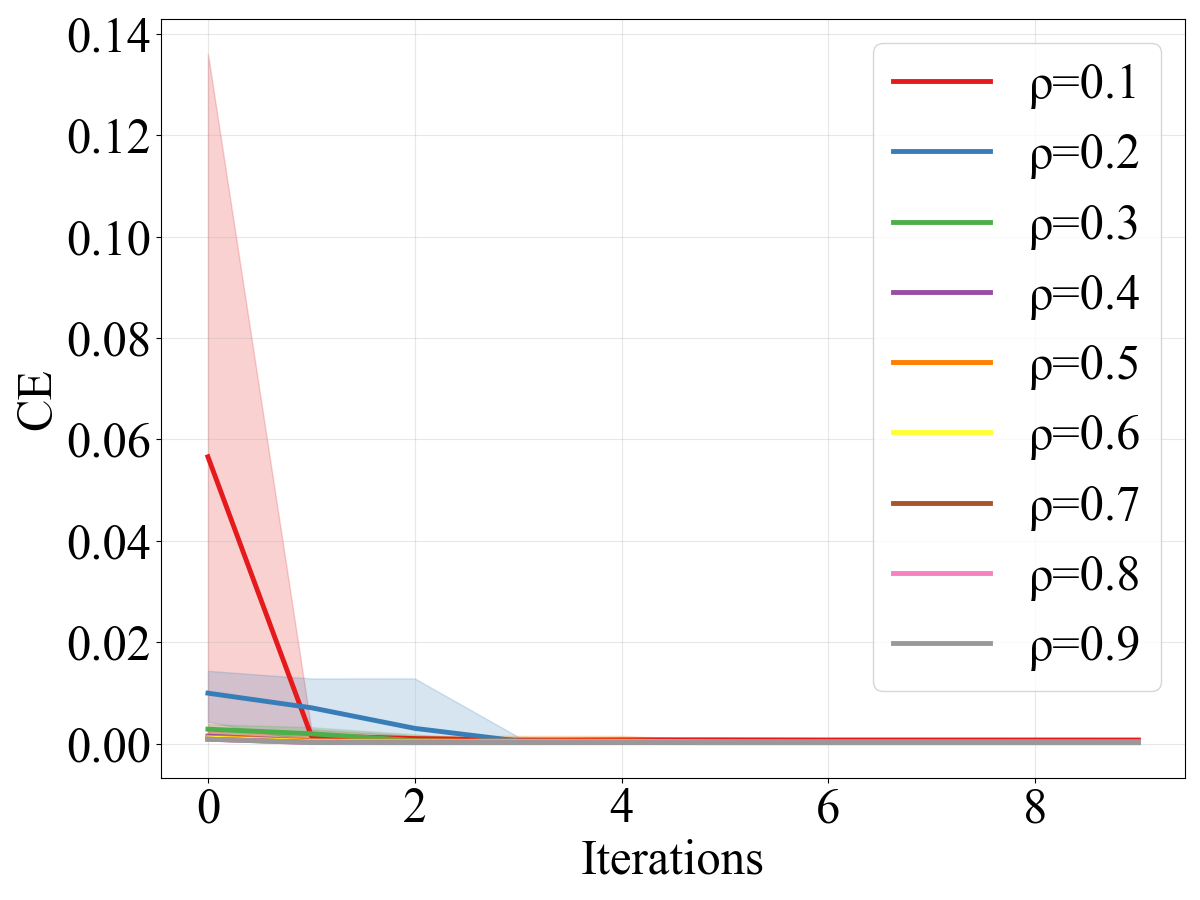}}
    \caption{\label{fig:loss_ego} Performance on large-scale simulation datasets in diverse communication topology.}
\end{figure*}

\vspace{-0.2cm}
\subsection{Performance on Large-scale Simulation Datasets}
\subsubsection{Evaluations of diverse communication topology}
To demonstrate the fully distributed characteristics of our algorithm and verify its flexibility to arbitrarily connected communication topology in a space containing objects, we designed a simulation experiment involving 21 Unmanned Aerial Vehicles (UAV) in a cluttered space based on platform EGO-Swarm \cite{ego-swarm}, as shown in Fig.~\ref{fig:trajectory20}. The simulation space contains 60 obstacles with randomly assigned poses, and 21 UAVs passed through obstacles and reached specified locations. Without loss of generality, we assume that the world-to-robot frame transform is known at the initial stage. Each UAV will get the initial pose of its trajectory using odometry measurements and obtain the corresponding noisy observations when its distance to any obstacle gets below $ 5 m$. All measurements are corrupted by Langevin rotation noise with $1^{\circ}$ standard deviation and Gaussian translation noise with $0.005 m$ standard deviation. Specifically, translations and rotations of the observations are randomly sampled as follows, $\tilde {t}_{ij}= t_{ij} + t_{ij}^{\epsilon},\ t_{ij}^{\epsilon} \sim \mathcal{N} (0,\tau_{ij}I_d)$, $\tilde {R}_{ij} = R_{ij}^{\epsilon}R_{ij},\ R_{ij}^{\epsilon} \sim Langevin (I_d, \kappa_{ij})$, 
where $\kappa_{ij}$ and $\tau_{ij}$ denote the root-mean-squared (RMS) error for rotational ($\tilde {R}_{ij}$) and translational ($\tilde {t}_{ij}$) measurements, respectively.

The degree of communication connectivity $\rho$ is defined as the probability of generating communication edges between any two robots. To compare the effectiveness of our method under various levels of communication connectivity, we collect the ground truth poses of the UAVs and obstacles in the simulation environment. We use SE-Sync\cite{Rosen19IJRR} and the Levenberg-Marquardt (LM) method from the G2O library, two centralized methods, as baselines for comparison when solving Problem~\ref{problem_formulation2}. 

We use the success ratio of ATE ($SR_{ATE}$)  as a criterion for evaluating the results, which is defined as follows:
\begin{equation}
    SR_{ATE}=\frac{ATE^0-ATE^k}{ATE^0-ATE^*}\times 100\%,
\end{equation}
where $\bar{T}_{i}$ is the ground truth value of pose $i$, $ATE=\sqrt{\frac{1}{N}\sum_{i=1}^N{\frac{1}{n_i}\sum_{j=1}^{n_i}{\lVert \text{Log}_I\left( \bar{x}_{i_j}^{-1}x_{i_j} \right) \rVert ^2}}}$, $ATE^0$ is the ATE of the initial poses, $ATE^k$ is the ATE of the result poses of our method, and $ATE^*$ is the ATE of the result poses of the centralized method. Besides, we defined the success rate of cost in terms of the cost function value as follows,
\begin{equation}
    SR_{cost}=\frac{F_{init}-F^k}{F_{init}-F^*}\times 100\%,
\end{equation}
which characterizes the gap between the cost function value achieved by our decentralized method and the cost function value achieved by a centralized algorithm. 

To measure the consistency and accuracy of the estimation results of all robots regarding the poses of environmental obstacles, we define the consistency error (CE) as follows, 
\begin{equation}
       CE=\frac{1}{N}\sum_{i=1}^N{\frac{1}{m}\sum_{j=1}^m{\lVert \text{Log}_I\left( y_{i_j}^{-1}y_{avg_{\text{j}}} \right) \rVert ^2}},   
\end{equation}
where $\boldsymbol{y}_{avg}=\underset{\left[ y_1,\cdots ,y_m \right]}{arg\min}\frac{1}{N}\sum_{i=1}^N{\frac{1}{m}\sum_{j=1}^m{\lVert \text{Log}_I\left( y_{i_j}^{-1}y_j \right) \rVert ^2}}$ is the average of all robot estimates of the obstacles' poses.
Moreover, we introduce a metric noted as Bias that reflects the error between the average and ground truth value:
\begin{equation}
    Bias=\sqrt{\frac{1}{N}\sum_{i=1}^N{\frac{1}{m}\sum_{j=1}^{m}{\lVert \text{Log}_I\left( \bar{y}_{i_j}^{-1}y_{i_j} \right) \rVert ^2}}}.
\end{equation}


We evaluated the success rate (defined as the optimality gap of cost) under varying levels of communication connectivity. The results presented in Fig.~\ref{fig:loss_ego} and Fig.~\ref{fig:SR_cost_matlab_respone} show the quantitative results of the large-scale simulation. Our method maintains a high success ratio in scenarios with different communication topologies. It demonstrates resilience to the sparseness of communication topology, showcasing its adaptability in unstructured and complex environments. Moreover, we find the estimates of all robots for the shared variables (the poses of environmental obstacles) converge to the consensus rapidly with a few iterations. At the same time, the mean value rapidly approaches the ground truth value. Besides, we found that in a few tests, the results of our decentralized algorithm are closer to the ground truth for different topology degrees than the centralized solution results. This is mainly because the PGO problem is non-convex, and the centralized method cannot find the optimal solution easily.

\begin{figure}
    \centering
    \includegraphics[width=\columnwidth]{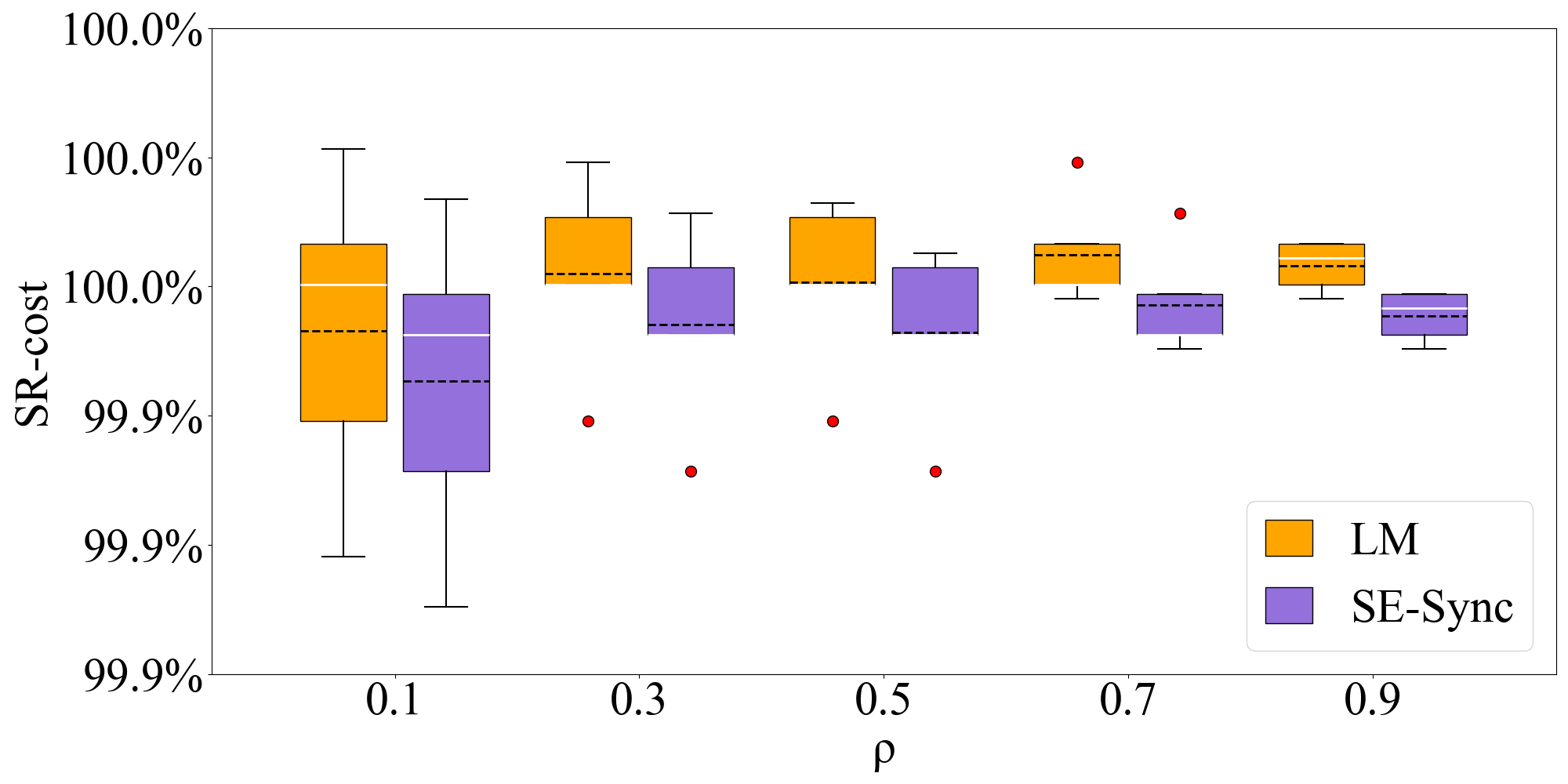}
    \caption{The success rate of cost on large-scale simulation datasets in diverse communication topologies.}
    \label{fig:SR_cost_matlab_respone}
\end{figure}


    

\subsubsection{Evaluations of diverse measurement noises}
In this experiment, we evaluate the robustness of our distributed PGO method under different levels of noise in measurements. Noise is introduced separately to both translational and rotational measurements with increasing magnitudes, with noise levels noted by $\tau$ for translation and $\kappa$ for rotation. The system consists of 21 robots, each exchanging information within a predefined communication network with $\rho=0.3$. We compare the performance of our distributed approach to two centralized PGO methods, LM and SE-Sync\cite{Rosen19IJRR}.

\begin{figure}
    \centering
    \subfloat[$\kappa=0.01$]{
        \label{fig:SR with Tau}
        \includegraphics[width=\columnwidth]{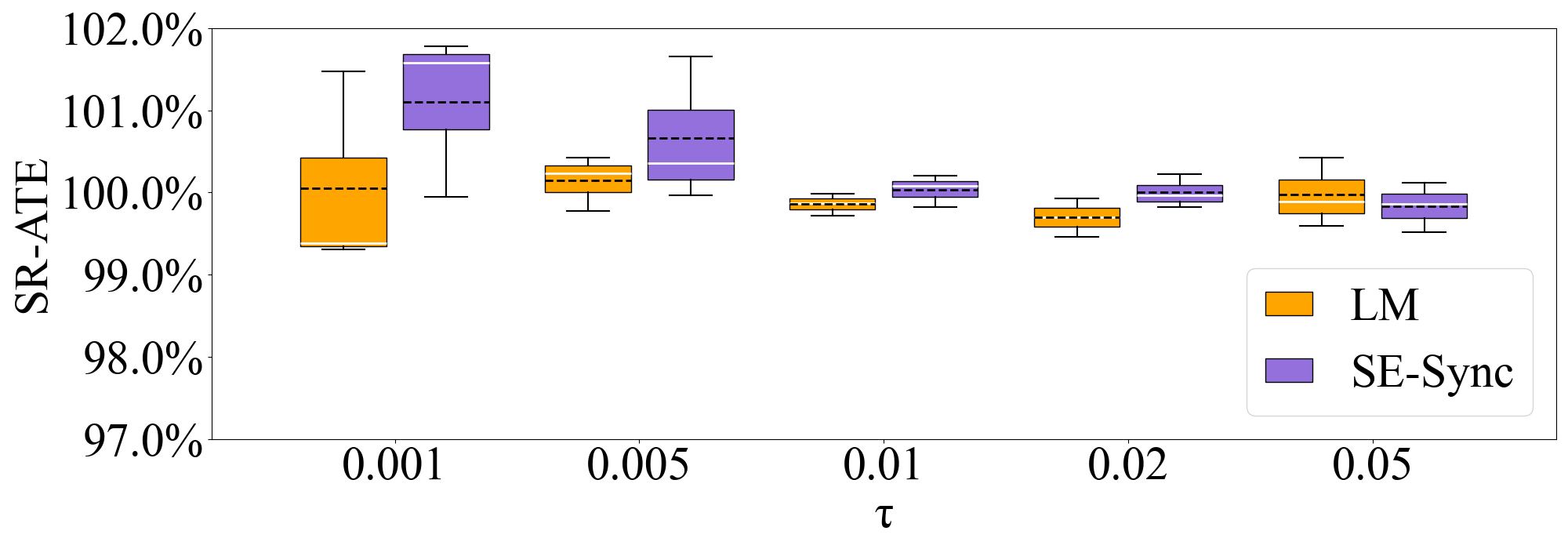}}
    \vspace{-0.3cm}
    \subfloat[$\tau=0.001$]{
        \label{fig:SR with kappa}
        \includegraphics[width=\columnwidth]{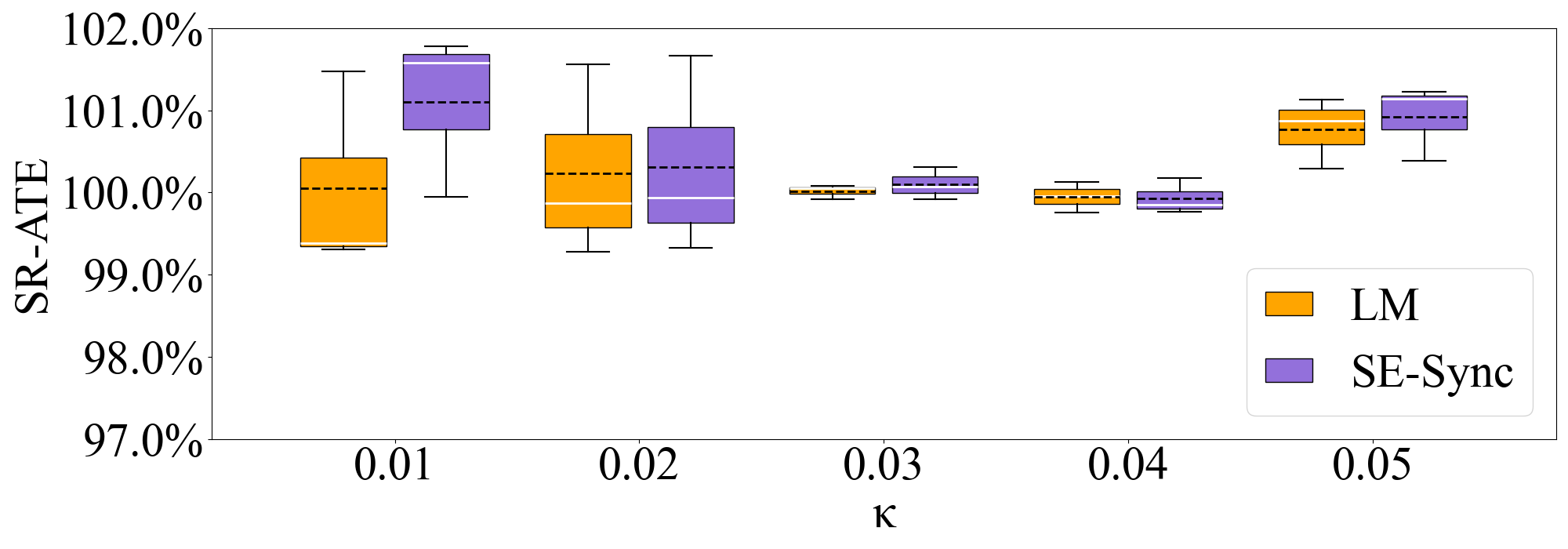}}
    \caption{\label{fig:loss_tb} Success ratio of ATE from the multi-level noisy datasets.}
\end{figure}

\begin{figure}
    \centering
    \subfloat[$\kappa=0.01$]{
        \label{fig:SR ATE with Tau}
        \includegraphics[width=\columnwidth]{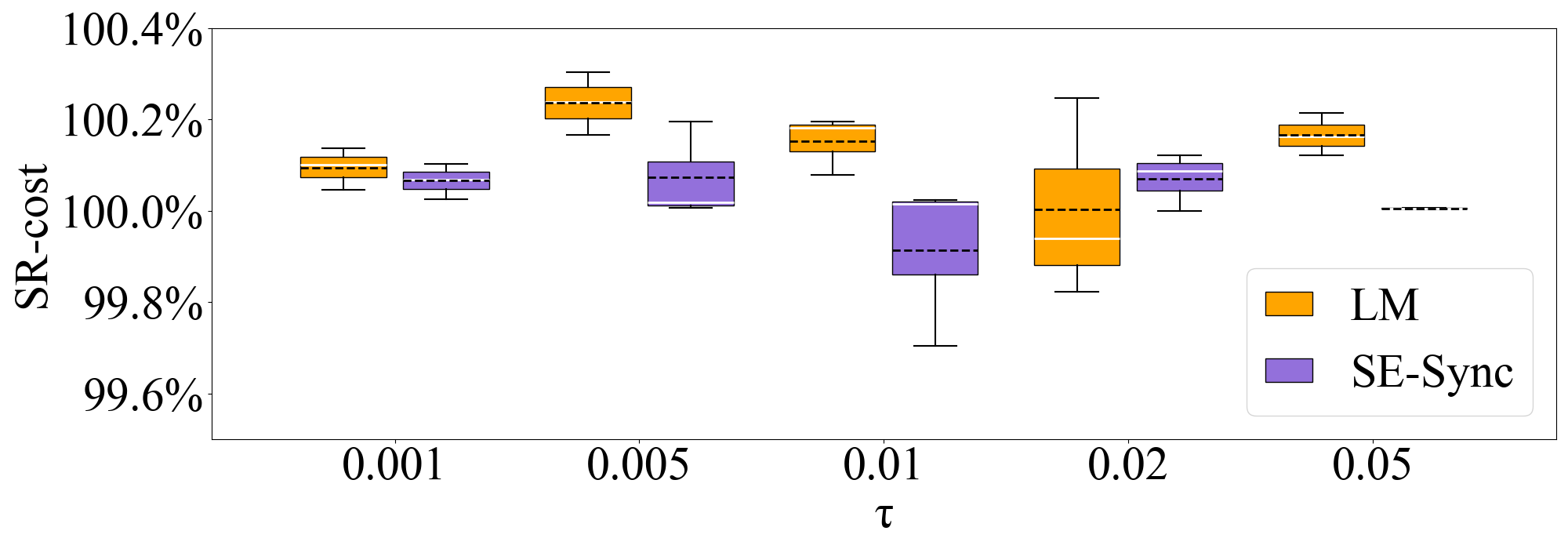}}
    \hspace{-0.1cm}
    \subfloat[$\tau=0.001$]{
        \label{fig:SR ATE with kappa}
        \includegraphics[width=\columnwidth]{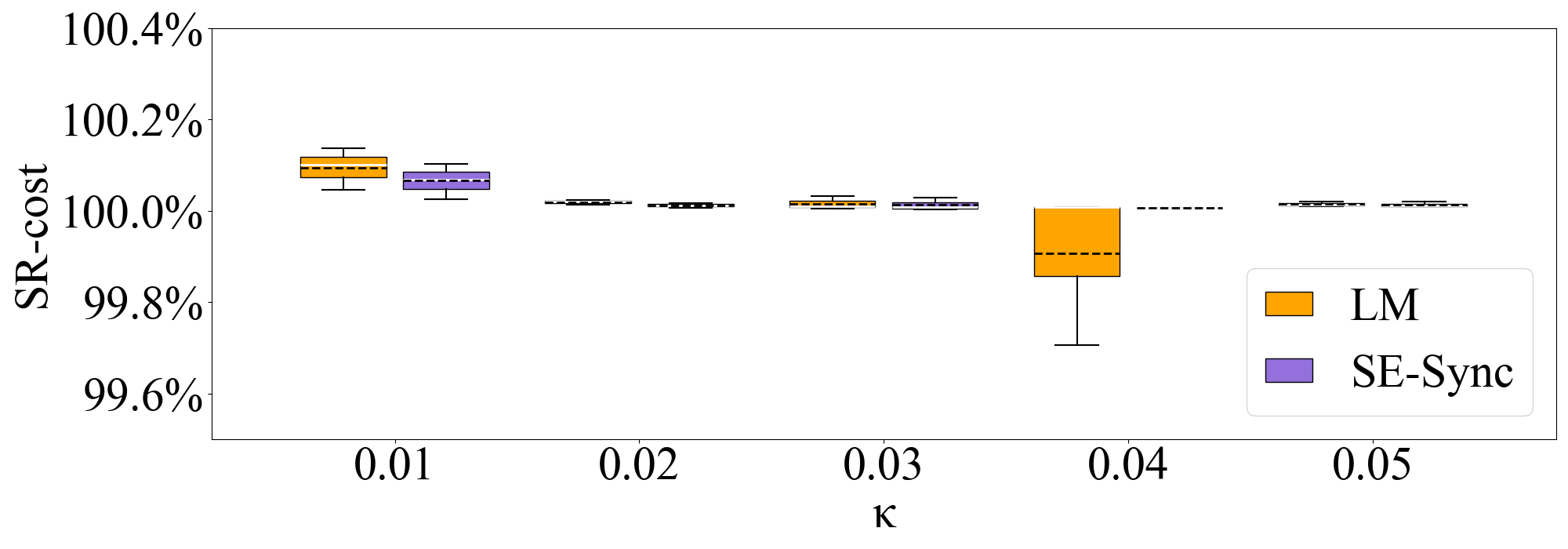}}
    \caption{\label{fig:OV_tb_response} Success ratio of cost from the multi-level noisy datasets.}

\end{figure}

The success ratio of our method under a multi-level noisy dataset are shown in Fig.~\ref{fig:loss_tb} and Fig.~\ref{fig:OV_tb_response}, demonstrating our distributed method achieves performance comparable to the centralized baseline, particularly at lower noise levels. As noise increases, the distributed method maintains robustness, showing only a marginal increase in error compared to the centralized approach. This indicates that our method effectively mitigates the impact of noise, maintaining consistency in both translational and rotational estimations, making it more suitable for noisy observations in real-world applications.

\subsubsection{Evaluations of the effectiveness of Schur complement}
To validate the effectiveness of using the Schur complement in our method, we performed several numerical experiments focusing on both computation time and accuracy. The experiment was conducted on a predefined communication network with a network density of $\rho = 0.5$, and the results were averaged over 20 trials for each configuration. The comparison was made between our method with the Schur complement (DRAN w/ SC) and without it (DRAN w/o SC). A summary of the results for the large-scale simulation dataset (Ego) is presented in Table~\ref{exp:time}. 

\begin{table}
\caption{DRAN performance comparison with or without Schur complement in Ego dataset.}
\centering
\footnotesize
\resizebox{.98\columnwidth}{!}{%
\begin{tblr}{
  cells = {c},
  cell{2}{1} = {r=2}{},
  cell{2}{2} = {r=2}{},
  hline{1-2,4} = {-}{},
  hline{3} = {3-5}{},
}
\textbf{DataSet} & \textbf{Private/public} & \textbf{Method} & \textbf{Time ave (ms)} & \textbf{Cost}   \\
Ego              & 8.735                   & DRAN w/ SC      & \textbf{31.39}         & \textbf{0.0919} \\
                 &                         & DRAN w/o SC     & 53.04                  & 0.0920          
\end{tblr}
}
\label{exp:time}
\end{table}

The experimental results illustrate that the Schur complement significantly reduces the average computation time without compromising the accuracy of the final optimization. Specifically, the method with the Schur complement demonstrated a reduction in computation time by approximately $41\%$ compared to the method without the Schur complement. On the Ego dataset, the average computation time with the Schur complement was $31.39 ms$, in contrast to $53.04 ms$ for the method without the Schur complement.
In terms of accuracy, the final optimized pose showed near-identical results: an objective value of $0.0919$ with the Schur complement and $0.0920$ without it. These results confirm that the application of the Schur complement improves computational efficiency without sacrificing optimization quality.


\subsection{Real-World Multi-Robot System Validation}\label{sec:physical exp}
To validate deployability under real sensing and communication constraints, we collected a real-world multi-robot dataset in an indoor environment at the Yuquan Campus, Zhejiang University, Hangzhou, China (Fig.~\ref{fig:dingo_system}). We placed 9 AprilTag-tagged boxes in the scene to provide object identities for cross-robot data association. During data collection, the robots were teleoperated to execute approximately circular trajectories, producing repeated observations and sufficient inter-robot overlap of observation region. For evaluation, ground-truth robot trajectories and object poses were recorded by an OptiTrack motion-capture system. The motion capture system is used only for evaluation and for initializing the world-to-robot frame transform at the start of each run, and is not involved in the subsequent estimation. Each robot obtains odometry by running VINS-Mono~\cite{vinsmono} on the onboard camera (RGB stream, 640$\times$480P at 30~FPS) stream and BMI055 IMU measurements (100~Hz), while loop closures are detected by DBoW2~\cite{DBoW2} and incorporated as relative-pose constraints. To ensure reproducible sparsity, we enforce a predefined random connected sparse communication graph $G_c$ at the application layer: although all robots are IP-reachable on the same WLAN, robot $i$ only exchanges messages with its neighbor set $\mathcal{N}_i$ specified by $G_c$. Communication is rate-limited to at most 2~Hz, and each message contains only the variables required by Algorithm~\ref{alg:NT1} (separator states and object pose estimates). On average, each robot's pose graph contains 112 nodes and 165 edges. The initial trajectory and object pose estimates are shown in Fig.~\ref{fig:dingo_initial}.

\begin{figure}
    \centering
    \includegraphics[width=\columnwidth]{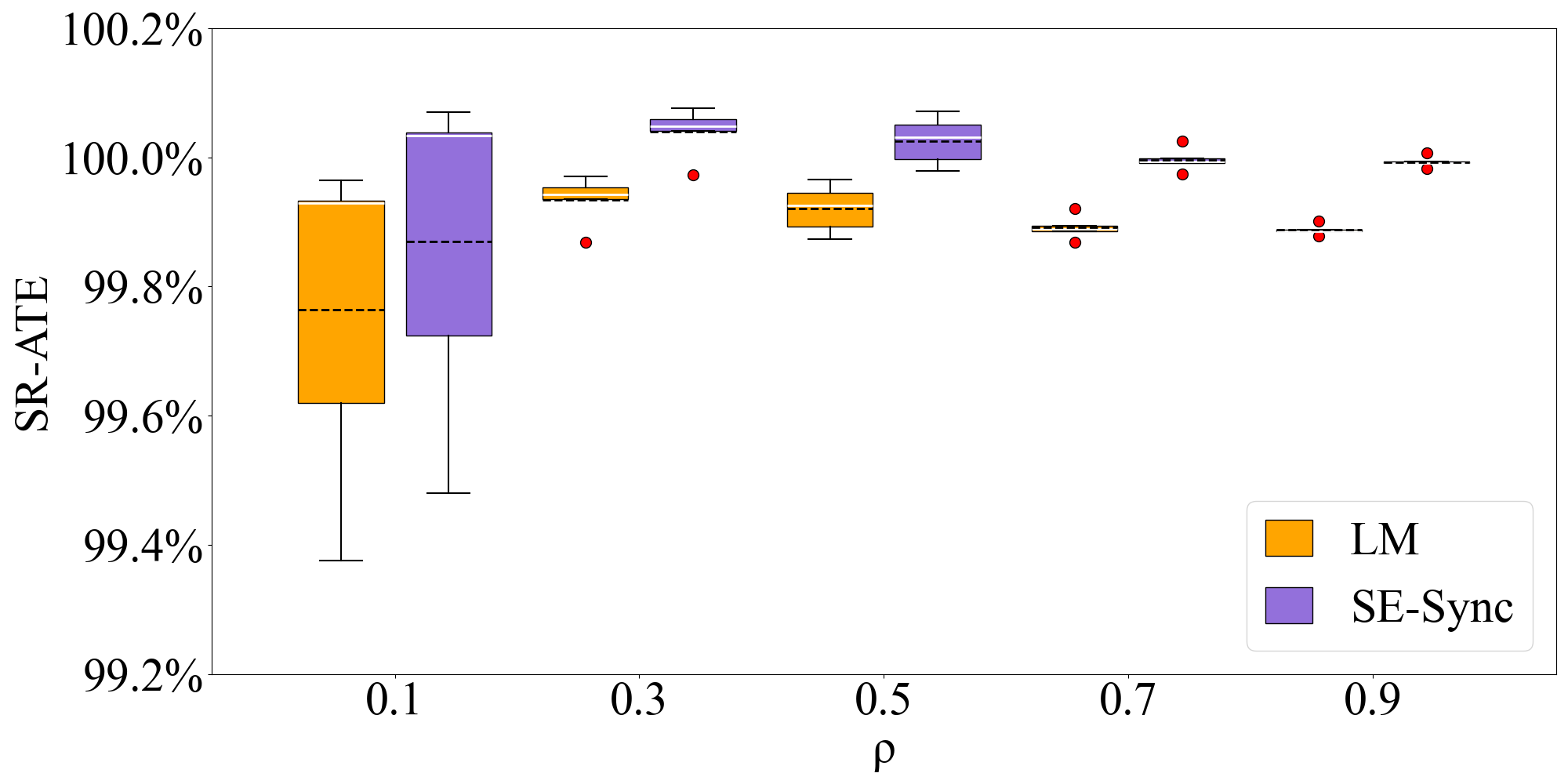}
    \caption{The success ratio of ATE under different degrees of connectivity for real-world dataset experiments.}
    \label{fig:dingo_exp}
\end{figure}

As the optimized trajectories and object poses shown in Fig. \ref{fig:dingo_optimized}, the algorithm can maintain consistent and accurate pose estimates, closely matching the ground truth obtained from the motion capture system. Similar to the simulation experiments, the quantitative metrics specific to the success ratio are reported in Fig.~\ref{fig:dingo_exp}. Our distributed PGO method demonstrated a high success ratio across various communication topologies, with performance close to the centralized baseline. Furthermore, the results confirmed the method's adaptability to different communication structures, showing robustness in communication-poor scenarios. This highlights the advantage of our approach in practical multi-robot applications, where communication constraints frequently arise. Notably, our method achieved similar accuracy to the centralized approach, even in sparse networks.

%% file: appendix.tex
\appendix
\section{Proofs for the Convergence Analysis}
\label{app:convergence_proofs}

This appendix collects the detailed proofs omitted from Section~\ref{sec:convergence}.

\subsection{Proof of Lemma~\ref{lem:conv_lm_spd} (Positive definiteness of the LM approximation)}
\label{app:proof:lem:conv-lm-spd}

For any tangent vector $\xi\in T_{z_i^k}\mathcal M_i$,
\[
    \langle \xi,M_i^k\xi\rangle
    =
    \|(J_i^k)\xi\|^2
    +
    \mu_i^k\|\xi\|^2
    \ge
    \underline\mu\|\xi\|^2 .
\]
Thus $M_i^k\succeq \underline\mu I$.
Moreover,
\[
    \langle \xi,M_i^k\xi\rangle
    =
    \|(J_i^k)\xi\|^2
    +
    \mu_i^k\|\xi\|^2
    \le
    (\bar J^2+\bar\mu)\|\xi\|^2 .
\]
Hence $M_i^k\preceq (\bar J^2+\bar\mu)I$.
The proof is complete.

\subsection{Proof of Lemma~\ref{lem:conv_schur_spd} (Positive definiteness of the Schur complement)}
\label{app:proof:lem:conv-schur-spd}

Since $M_i^k\succeq c_m I\succ0$, every principal block of $M_i^k$ is positive definite. 
Therefore $A_i^k\succ0$.

Because $M_i^k\succ0$ and $A_i^k\succ0$, the Schur complement 
\[
    \hat H_i^k=B_i^k-(C_i^k)^\top(A_i^k)^{-1}C_i^k
\]
is positive definite.

It remains to prove the explicit spectral bounds. 
For the lower bound, use the block inverse identity:
the lower-right block of $(M_i^k)^{-1}$ is $(\hat H_i^k)^{-1}$. 
Since $M_i^k\succeq c_m I$, we have
\[
    (M_i^k)^{-1}\preceq \frac{1}{c_m}I .
\]
Therefore,
\[
    (\hat H_i^k)^{-1}\preceq \frac{1}{c_m}I,
\]
which implies $\hat H_i^k\succeq c_m I .$

For the upper bound, for any public tangent vector $v$,

\begin{equation*}
    \begin{aligned}
        \langle v,\hat H_i^k v\rangle
        &=
        \min_{u}
        \left\langle 
        \begin{bmatrix}
            u\\v
        \end{bmatrix},
        M_i^k
        \begin{bmatrix}
            u\\v
        \end{bmatrix}
        \right\rangle\\
        &\le
        \left\langle 
        \begin{bmatrix}
            0\\v
        \end{bmatrix},
        M_i^k
        \begin{bmatrix}
            0\\v
        \end{bmatrix}
        \right\rangle
        \le
        c_M\|v\|^2 .
    \end{aligned}
\end{equation*}
Thus $\hat H_i^k\preceq c_M I$.
The proof is complete.

\subsection{Proof of Theorem~\ref{thm:conv_descent} (Curvature-preconditioned descent)}
\label{app:proof:thm:conv-descent}

By Assumption~\ref{ass:conv_pullback_smooth}, with 
$\xi=\alpha\xi_{i,N}^k$, we have
\[
\begin{aligned}
    \hat L_i(z_i^{k+1})
    &=
    \hat L_i\bigl(\operatorname{Retr}_{z_i^k}(\alpha\xi_{i,N}^k)\bigr)\\
    &\le
    \hat L_i(z_i^k)
    +
    \alpha\langle g_i^k,\xi_{i,N}^k\rangle
    +
    \frac{L\alpha^2}{2}
    \|\xi_{i,N}^k\|^2 .
\end{aligned}
\]
Since $\xi_{i,N}^k=-(M_i^k)^{-1}g_i^k$,
\[
    \langle g_i^k,\xi_{i,N}^k\rangle
    =
    -\langle g_i^k,(M_i^k)^{-1}g_i^k\rangle .
\]
Using $M_i^k\preceq c_M I$, we obtain
\[
    \langle g_i^k,(M_i^k)^{-1}g_i^k\rangle
    \ge
    \frac{1}{c_M}\|g_i^k\|^2 .
\]
Using $M_i^k\succeq c_m I$, we also have
\[
    \|\xi_{i,N}^k\|
    =
    \|(M_i^k)^{-1}g_i^k\|
    \le
    \frac{1}{c_m}\|g_i^k\| .
\]
Therefore,
\[
\begin{aligned}
    \hat L_i(z_i^{k+1})
    &\le
    \hat L_i(z_i^k)
    -
    \frac{\alpha}{c_M}\|g_i^k\|^2
    +
    \frac{L\alpha^2}{2c_m^2}\|g_i^k\|^2\\
    &=
    \hat L_i(z_i^k)
    -
    \left(
    \frac{\alpha}{c_M}
    -
    \frac{L\alpha^2}{2c_m^2}
    \right)
    \|g_i^k\|^2 .
\end{aligned}
\]
The stepsize condition \eqref{eq:conv_stepsize} guarantees $\rho_{\rm AN}>0$.
The proof is complete.

\subsection{Proof of Corollary~\ref{cor:conv_stationarity} (Non-asymptotic first-order stationarity)}
\label{app:proof:cor:conv-stationarity}

Summing \eqref{eq:conv_descent_bound} from $k=0$ to $K-1$ gives
\[
    \rho_{\rm AN}
    \sum_{k=0}^{K-1}
    \|g_i^k\|^2
    \le
    \hat L_i(z_i^0)-\hat L_i(z_i^K)
    \le
    \hat L_i(z_i^0)-\hat L_i^\star .
\]
Therefore,
\[
    \min_{0\le k\le K-1}\|g_i^k\|^2
    \le
    \frac{1}{K}
    \sum_{k=0}^{K-1}\|g_i^k\|^2
    \le
    \frac{
        \hat L_i(z_i^0)-\hat L_i^\star
    }{
        \rho_{\rm AN}K
    } .
\]
The proof is complete.

\subsection{Proof of Lemma~\ref{lem:lm_spectral_equivalence} (Local spectral equivalence of the LM approximation)}
\label{app:proof:lem:lm-spectral-equivalence}

From \eqref{eq:true_hessian_decomposition} and \eqref{eq:lm_approx_again}, we have
\[
    M_i^k-H_i^k
    =
    \mu_i^k I
    -
    R_i^k .
\]
Therefore,
\[
    \|M_i^k-H_i^k\|
    \le
    \mu_i^k+\|R_i^k\|
    \le
    \varepsilon_\mu^k+\varepsilon_H^k
    \le
    \delta m_H .
\]
Hence,
\[
    -\delta m_H I
    \preceq
    M_i^k-H_i^k
    \preceq
    \delta m_H I .
\]
Since Assumption~\ref{ass:local_regular_gauge_fixed} gives $H_i^k\succeq m_H I$, we have
\[
    \delta m_H I\preceq \delta H_i^k .
\]
Thus,
\[
    -\delta H_i^k
    \preceq
    M_i^k-H_i^k
    \preceq
    \delta H_i^k .
\]
Rearranging the above inequality gives
\[
    (1-\delta)H_i^k
    \preceq
    M_i^k
    \preceq
    (1+\delta)H_i^k .
\]
The proof is complete.

\subsection{Proof of Theorem~\ref{thm:conv_conditioning} (Conditioning effect of approximate Newton preconditioning)}
\label{app:proof:thm:conv-conditioning}

From \eqref{eq:lm_spectral_equivalence}, we have
\[
    M_i^k
    \preceq
    (1+\delta)H_i^k .
\]
Multiplying both sides by $(M_i^k)^{-1/2}$ from the left and right yields
\[
    I
    \preceq
    (1+\delta)
    (M_i^k)^{-1/2}H_i^k(M_i^k)^{-1/2}.
\]
Therefore,
\[
    \frac{1}{1+\delta}I
    \preceq
    (M_i^k)^{-1/2}H_i^k(M_i^k)^{-1/2}.
\]

Similarly, from
\[
    M_i^k
    \succeq
    (1-\delta)H_i^k ,
\]
we obtain
\[
    I
    \succeq
    (1-\delta)
    (M_i^k)^{-1/2}H_i^k(M_i^k)^{-1/2},
\]
which gives
\[
    (M_i^k)^{-1/2}H_i^k(M_i^k)^{-1/2}
    \preceq
    \frac{1}{1-\delta}I .
\]
Combining the two inequalities proves \eqref{eq:conv_preconditioned_spectrum}. 
Thus, all eigenvalues of the preconditioned Hessian lie in
\[
    \left[
        \frac{1}{1+\delta},
        \frac{1}{1-\delta}
    \right],
\]
and its condition number satisfies
\[
    \kappa_{\rm AN}
    \le
    \frac{1/(1-\delta)}{1/(1+\delta)}
    =
    \frac{1+\delta}{1-\delta}.
\]
On the other hand, without preconditioning, the relevant local curvature matrix is $H_i^k$, whose eigenvalues lie in $[m_H,L_H]$. 
Hence, the corresponding condition number is $\kappa_{\rm GD}=L_H/m_H$. 
The proof is complete.